\newcommand{\thickhline}{%
	\noalign {\ifnum 0=`}\fi \hrule height 1pt
	\futurelet \reserved@a \@xhline
}
\newcolumntype{"}{@{\vrule width 1pt \hskip\tabcolsep}}
\newcolumntype{'}{@{\hskip\tabcolsep \vrule width 1pt }}
\theoremstyle{plain}
\newtheorem{theorem}{Theorem}[section]
\newtheorem{lemma}[theorem]{Lemma}
\newtheorem{corollary}[theorem]{Corollary}
\theoremstyle{definition}
\newtheorem{assumption}[theorem]{Assumption}
\theoremstyle{remark}
\newtheorem{remark}[theorem]{Remark}
\title{Learning Large-Scale MTP$_2$ Gaussian Graphical Models via Bridge-Block Decomposition}
\author{%
	Xiwen Wang$^1$, \ \ Jiaxi Ying$^{1,2}$\thanks{Corresponding author.}, \ \ Daniel P. Palomar$^{1}$ \\ 
	The Hong Kong University of Science and Technology$^1$\\ 
	HKUST Shenzhen-Hong Kong Collaborative Innovation Research Institute$^2$\\
	\{xwangew, jx.ying\}@connect.ust.hk, palomar@ust.hk 
}
\begin{document}
	
	\maketitle
	
	\begin{abstract}
		This paper studies the problem of learning the large-scale Gaussian graphical models that are multivariate totally positive of order two ($\text{MTP}_2$). By introducing the concept of bridge, which commonly exists in large-scale sparse graphs, we show that the entire problem can be equivalently optimized through (1) several smaller-scaled sub-problems induced by a \emph{bridge-block decomposition} on the thresholded sample covariance graph and (2) a set of explicit solutions on entries corresponding to  \emph{bridges}. From practical aspect, this simple and provable discipline can be applied to break down a large problem into small tractable ones, leading to enormous reduction on the computational complexity and substantial improvements for all existing algorithms.  The synthetic and real-world experiments demonstrate that our proposed method presents a significant speed-up compared to the state-of-the-art benchmarks.
	\end{abstract}
	
	\section{Introduction}
	\label{Introduction}
	
	In recent decades, the surge in data availability has posed challenges for analyzing and understanding large-scale datasets. A key challenge is examining pairwise relationships among numerous variables, which can be represented using Gaussian graphical models (GGMs) \cite{rue2005gaussian} that depict variable connections as graphs. The precision matrix, which is the inverse of the covariance matrix, helps determine the non-zero patterns of GGMs \cite{lauritzen1996graphical,whittaker2009graphical}. A traditional approach to estimate the precision matrix is the graphical lasso \cite{banerjee2008model,friedman2008sparse}, which is formulated as a regularized log-determinant program. The solutions of graphical lasso possess an appealing property known as sparsity, which severs as a common assumption particularly in large graphical models and has been shown to offer numerous benefits by a significant amount of research. For example, the sparsity can reduce the model size and improve the interpretability of the model by highlighting the most important variables and their relationships, allowing better understanding the underlying causal structure \cite{yuan2007model,liu2010stability,danaher2014joint}.  
	
	In this paper, we solve the large-scale sparse precision matrix estimation problem for GGMs that follow a multivariate totally positive of order two (MTP$_2$) Gaussian distribution \cite{bolviken1982probability,malioutov2006walk}, or equivalently, possess a precision matrix that is a symmetric $M$-matrix \cite{plemmons1977m}, where all off-diagonal elements are non-positive. The so-called $\text{MTP}_{2}$ property is a special form of positive dependence and plays an essential role in applications where all interaction potentials are considered non-negative or the focus is on emphasizing positive associations between variables \cite{fallat2017total,lauritzen2019maximum,rottger2023total,rossell2021dependence}. The $\text{MTP}_{2}$ property has been applied in various fields. Here are some examples. In finance, $\text{MTP}_{2}$ structures are often exploited as the asset returns are often considered positively correlated \cite{muller2005archimedean,agrawal2022covariance,zhou2022covariance}; In machine learning, $\text{MTP}_{2}$ GGMs are recognized as attractive Markov random field and have been used in applications such as taxonomic reasoning \cite{slawski2015estimation} and psychometrics \cite{lauritzen2019maximum}. The $\text{MTP}_{2}$ precision matrix, also referred to as the generalized graph Laplacian \cite{pavez2018learning,Ying2021fast,deng2020fast}, along with the combinatorial graph Laplacian \cite{ying2020does,ying2020nonconvex,ying2021minimax,egilmez2017graph,ying2023network}, have found broad applications across a variety of fields due to their unique characteristics. These applications include, but are not limited to, graph Fourier transform \cite{shuman2013emerging}, electrical circuits analysis \cite{dorfler2012kron}, image and video coding \cite{egilmez2016gbst}, financial time-series analysis \cite{cardoso2021graphical,cardoso2022learning}, and structured graph learning \cite{kumar2019unified,kumar2019structured}.

	The objective of this paper is to build a theoretical foundation and devise effective approaches for learning large-scale, sparse $\text{MTP}_2$ GGMs. The contributions of this paper are threefold.   
	\begin{itemize} [topsep=-0.5pt, leftmargin=*]
		\setlength\itemsep{-0.1em}
		\item This is the first work in the literature that introduces the notion of bridges in the context of learning $\text{MTP}_2$ GGMs to the best of our knowledge. Building upon this notion, we prove that the presence of explicit solutions for some entries depends on whether an edge functions as a bridge. Meanwhile, we demonstrate that the optimal solution exhibits a decomposed structure through a vertex-partition known as bridge-block decomposition.
		\item Based on theoretical findings, we propose an efficient framework that decomposes a large problem into several small tractable ones, each of which can be solved using any existing algorithm. With some negligible extra cost for bridge-block decomposition, the proposed method results in a dimension reduction that significantly cuts down the computational complexity. 
		\item The synthetic and real-world experiments demonstrate that our proposed methods prompt a considerable speed-up for all existing methods and enable the solving of large-scale $\text{MTP}_2$ GGMs that were previously considered infeasible.  
	\end{itemize}

	\subsection{Notation and Organization}
	Vectors and matrices are written as lower and upper case bold letters, respectively. An undirected graph is denoted as $G=\left(\mathcal{V},\mathcal{E}\right) $, where $\mathcal{V}$ is the set of nodes with size $\left| \mathcal{V} \right| =p$ and $ \mathcal{E}\subset \mathcal{V}\times\mathcal{V}$ is the set of edges. Note that we shall not distinguish between $\left( i,j\right) \in G$ and $\left(i,j\right)\in\mathcal{E}$. Some graph terminologies frequently used throughout the paper are introduced as follows. 
	\begin{itemize} [topsep=0pt, leftmargin=*]
		\setlength\itemsep{0em}
		\item \textbf{Support graph} Given a symmetric matrix $\bm{A}\in\mathbb{S}^{p}$, the support graph of $\bm{A}$, denoted as $\text{supp}\left(\bm{A}\right)$, is defined as an undirected graph with the vertex set $\mathcal{V}=\left\{ 1,\dots,p\right\}$  and the edge set $\mathcal{E}\in\mathcal{V}\times\mathcal{V}$ such that $\left(i,j\right)\in\mathcal{E}$ if and only if $A_{ij}\neq0$ for every two different vertices $i,j\in\mathcal{V}$.
		\item \textbf{Neighbors} The neighbors of a node $i$ refer to the subset of vertices that are connected to this node, i.e., $\mathcal{N}\left(i\right)=\left\{ \left.j\right|\left(i,j\right)\in\mathcal{E}\right\}$. 
		\item \textbf{Path and }\textbf{Cycle} A path from node $i_{1}\in\mathcal{V}$ to node $i_{T}\in\mathcal{V}$ is a sequence (i.e., an ordered set) of edges denoted as $d_{i_1,i_T}$, each one incident to the next, i.e., $\left\{ \left(i_{t},i_{t+1}\right)\right\} _{t=1}^{T-1}\subseteq\mathcal{E}$. A cycle is a path from a node to itself when it does not duplicately include the same edge.  
		\item \textbf{Partition} A partition $\mathcal{P}=\left\{ \mathcal{V}_{1},\dots,\mathcal{V}_{K}\right\}$ is a collection of non-empty, disjoint vertex sets $\mathcal{V}_{1},\dots,\mathcal{V}_{K}$ called cluster such that $\bigcup_{i=1}^{K}\mathcal{V}_{k}=\mathcal{V}$. 
		\item \textbf{Component} A component of $G$ refers to a subset of vertices that are connected to each other by edges, but are not connected to any other vertices in the graph.
	\end{itemize}

	The paper is organized as follows. In Section \ref{Problem_Formulation}, we introduce the problem formulation and related works. In Section \ref{sec:Proposed_methods}, we present our main result along with its potential impact. Section \ref{sec:numerical_simulations} presents the numerical experiments to show that the proposed methods can facilitate the performance of existing algorithms and solve large-scale $\text{MTP}_2$ GGMs problems \footnote{Codes are available in \url{https://github.com/Xiwen1997/mtp2-bbd}.}. In Section \ref{sec:conclusions} we draw the conclusions.

	\section{Problem Formulation and Related Works}
	\label{Problem_Formulation}
	\subsection{Problem Formulation}
	Let $\bm{y}$ be a random vector that follows a Gaussian distribution with zero mean and a precision matrix $\boldsymbol{\Theta}$, a.k.a. the inverse of covariance matrix $\boldsymbol{\Sigma}$. A Gaussian graphical model represents the conditional dependency between random variables via a graph $G$, in which nodes correspond to $\bm{y}$ and edges between these variables represent conditional dependencies, which can be equivalently characterized by the non-zero patterns of the precision matrix $\boldsymbol{\Theta}$ \cite{lauritzen1996graphical}.
	
	This paper considers estimating the precision matrix $\boldsymbol{\Theta}$ given $n$ independent and identically distributed observations $\left\{ \bm{y}_1,\dots,\bm{y}_n \right\}$ that follow an $\text{MTP}_2$ Gaussian distribution. Equivalently, $\boldsymbol{\Theta}$ is assumed to be a symmetric $M$-matrix, i.e., 
	$\Theta_{ij}\leq 0$ for any $i\neq j$. The problem is formulated as
	\begin{equation}
		\begin{array}{ll}
			\underset{\boldsymbol{\Theta}\in\mathcal{M}^p}{\mathsf{minimize}} & -\log\det\left(\boldsymbol{\Theta}\right)+\left\langle \boldsymbol{\Theta},\bm{S}\right\rangle +\sum_{i\neq j}\Lambda_{ij}\left|\Theta_{ij}\right|,
		\end{array}\label{eq: Graphical Lasso under MTP2}
	\end{equation}
	where $\bm{S}$ is the sample covariance matrix, $\Lambda_{ij}\geq 0$ are the regularization coefficients, the objective is to minimize the negative log-likelihood of the data subject to an $\ell_1$-norm penalty on the precision matrix, and $\mathcal{M}^p$ refers to a set of $M$-matrices with dimension $p$, i.e.,
	\begin{equation*}
		\mathcal{M}^{p}=\left\{ \boldsymbol{\Theta}\in \mathbb{S}^p \left|\boldsymbol{\Theta}\succ\bm{0}\text{ and }\Theta_{ij}\leq0,\forall i\neq j\right.\right\} .
	\end{equation*}
	We refer to $\boldsymbol{\Theta}^\star$ as the optimal solution of \eqref{eq: Graphical Lasso under MTP2} and to the support graph of $\boldsymbol{\Theta}^\star$, i.e., $\text{supp}\left(\boldsymbol{\Theta}^\star\right) $, as the \emph{optimal graph}. Particularly, we define the \emph{thresholded sample covariance matrix} $\bm{T}$ as \begin{equation}
		T_{ij}=\begin{cases}
			S_{ij}-\Lambda_{ij} & \text{if }i\neq j\text{ and } S_{ij} >\Lambda_{ij},\\
			0 & \text{otherwise},
		\end{cases}
	\end{equation}
	and the support graph of $\bm{T}$ as the \emph{thresholded sample covariance graph} (short for \emph{thresholded graph}). Clearly, entries corresponding to $S_{ij}\leq \lambda_{ij}$ will be thresholded to zero. Since the thresholded graph holds immense importance throughout the paper, unless explicitly mentioned otherwise, we denote $G = \text{supp}\left( \bm{T}\right)$.
	
	The involvement of $\text{MTP}_2$ constraints, which require that $\Theta_{ij}\leq0$ for all $i\neq j$, confers several advantages \cite{plemmons1977m}. For example, all $M$-matrices are inverse-positive, i.e., $\boldsymbol{\Theta}^{-1}\geq \bm{0}$, and the non-smoothness from $\ell_1$ norms can be eliminated via
	$ \sum_{i\neq j}\Lambda_{ij}|\Theta_{ij}|=-\langle \boldsymbol{\Lambda},\boldsymbol{\Theta}\rangle,$ where $\boldsymbol{\Lambda}=(\Lambda_{ij})$ and $\text{diag}\left(\boldsymbol{\Lambda} \right) =\bm{0}$. Meanwhile, the $\text{MTP}_2$ constraints maintain the sparsity in the estimated precision matrix as an implicit regularizer \cite{slawski2015estimation}.
	Throughout the paper, we require the following assumption.
	\begin{assumption}
		\label{assu:unique_assumption}
		For any $i\neq j$, we have $S_{ij}<\sqrt{S_{ii}S_{jj}}$.
	\end{assumption}
	Assumption \ref{assu:unique_assumption} holds with probability 1 if the number of observations $n\geq2$ \cite{lauritzen2019maximum,slawski2015estimation}. Under Assumption \ref{assu:unique_assumption}, the minimizer of Problem \eqref{eq: Graphical Lasso under MTP2} exists and is unique according to \cite[Theorem 1]{slawski2015estimation}.
	
	\subsection{Related Works}
	
	In literature, devising algorithms for learning $\text{MTP}_2$ GGMs has garnered considerable attentions. For instance, block coordinate descent (BCD) methods \cite{slawski2015estimation,egilmez2017graph,pavez2016generalized} update a single row/column of the precision matrix in a cyclic manner by solving non-negative least squares problems. Projection-based methods, such as projected gradient descent \cite{ying2023adaptive} and projected quasi-Newton algorithms \cite{cai2021fast}, iteratively take steps along the descent direction and then project the solutions back to the feasible region. Despite these efforts, existing research has not fully explored the potential of exploiting $\text{MTP}_2$ properties to reduce computational complexity. With complexities of $\mathcal{O}(p^4)$ for BCD methods and $\mathcal{O}(p^3)$ for projection-based techniques, addressing large-scale problems continues to be challenging, particularly when manipulating full-dimensional matrices without problem reduction.
	
	Instead of devising efficient algorithms, recently, leveraging the properties of the thresholded sample covariance graph has emerged as a popular approach for learning GGMs. Specifically, the existence of closed-form solution has been established for graphical lasso when the thresholded sample covariance graph is acyclic (i.e., contains no cycles) \cite{sojoudi2016equivalence,fattahi2019graphical}. However, the applicability of their main results is limited by certain conditions that are difficult to verify. Interestingly, those conditions are unnecessary for the existence of closed-from solutions in $\text{MTP}_2$ GGMs \cite{pavez2018learning}.  Despite the theoretical establishment on closed-form solutions, an exact acyclic structure is rarely observed in practice. Therefore, more research dives into the relationship between $\text{supp}\left(\bm{T}\right)$ and $\text{supp}\left(\boldsymbol{\Theta}^{\star}\right)$. Research found that $\text{supp}\left(\boldsymbol{\Theta}^{\star}\right)$ is a subset of $\text{supp}\left(\bm{T}\right)$ \cite{lauritzen2019maximum}, the number of connected components in $\text{supp}\left(\bm{T}\right)$  is identical to that in $\text{supp}\left(\boldsymbol{\Theta}^{\star}\right)$ \cite{pavez2018learning}, and there exist necessary conditions for the presence of edges in  $\text{supp}\left(\boldsymbol{\Theta}^{\star}\right)$ via analyzing the path products on thresholded matrix \cite{lauritzen2019maximum}.
	
	This paper advances prior research in two ways. Firstly, unlike previous studies that only provided an explicit solution for $\Theta_{ij}$ in the case of acyclic thresholded graphs, we reveal that this explicit solution for $\Theta_{ij}$ consistently applies to all $(i,j)$ pairs acting as bridges, regardless of whether the thresholded graph is acyclic or non-acyclic. Secondly, we highlight that the optimal graph can be represented in an equivalent decomposed form through a vertex partition, termed as the bridge-block decomposition of the thresholded graph by leveraging $\text{MTP}_2$ properties.
	
	\section{Proposed Methods}
	\label{sec:Proposed_methods}
	The goal of this paper is not about originating any specific algorithms but to shed light on the remarkable properties concealed in the thresholded graph. This section is organized as follows. In Section \ref{sec:bbd}, we introduce bridge and bridge-block decomposition. Section \ref{subsec:main_results} presents our main result. Then in Section \ref{subsec:closed-form}, we elaborate on the contributions and connections to existing research.
	
	\subsection{Bridge and Bridge-Block Decomposition}
	\label{sec:bbd}
	Bridge is one of the important concepts in graph theory. Technically, an edge is called a \emph{bridge} if and only if its deletion increases the number of graph components. Therefore, an edge is a bridge only when it is not contained in any cycles \cite{schmidt2013simple}. Notably, when a graph consists solely of bridges, it is referred to as \emph{acyclic}. In this paper, we denote $\mathcal{B}$ as the set of all bridges, i.e,
	\begin{equation}
		\mathcal{B}:=\left\lbrace \left. \left(i,j \right) \right| \left(i,j \right) \text{is a bridge in } G \right\rbrace.
	\end{equation} 
	
	\begin{remark}
		Bridges are frequently observed, particularly in large-scale sparse graphs \cite{hojsgaard2008graphical,stein2005space,li2008network}. This is attributed to the fact that in sparse graphs, only the most significant relationships between variables are retained, and the removal of edges can create additional connected components, giving rise to the presence of bridges. 
	\end{remark}
	\begin{remark}
		In practice, the set $\mathcal{B}$ can be efficiently obtained via various bridge-finding algorithms \cite{schmidt2013simple,tarjan1974note}. These algorithms employ a depth-first search approach, resulting in a computational complexity of $\mathcal O\left(|\mathcal V|+|\mathcal E|\right)$. In the sparse graphs that are of interest to us, the number of edges $|\mathcal E|$ typically scales similarly to the number of nodes $|\mathcal V|$. As a result, the computational cost of identifying bridges in large-scale sparse graphs remains low.
	\end{remark}
	
	Using Figure \ref{fig:examples} as an illustration, we perform the \emph{bridge-block decomposition} \cite{westbrook1992maintaining} as follows. By removing all the bridges, we compute the clusters $\mathcal{V}_k$ corresponding to the components of $G=\left(\mathcal{V}, \mathcal{E}\setminus \mathcal{B}\right)$. This process results in a vertex-partition, known as the bridge-block decomposition:
	\begin{equation}
		\mathcal{P}^{\text{bbd}}=\left\{ \mathcal{V}_{1},\mathcal{V}_{2},\dots,\mathcal{V}_{K}\right\},
	\end{equation} 
	where $K$ refers to the number of clusters, also the number of components in the graph $\left(\mathcal{V},\mathcal{E}\setminus\mathcal{B}\right)$.  
	
	Without loss of generality, we define the operator $\psi$ as $\psi\left(i\right)=k$ if node $i$ belongs to the $k$-th cluster. In the literature, the bridge-block decomposition is also known as the \emph{2-edge-connected decomposition} \cite{chang2013efficiently,zhou2012finding}, a fundamental concept in graph theory with numerous applications such as community search \cite{fang2020survey}, social network mining \cite{agrawal2003mining}, and transmission networks \cite{lan2021refining}. 
	
	\tikzset{
		gline/.style ={color = gray, line width =1.2pt}
	}
	\tikzset{
		rline/.style ={color = red, line width =1.2pt}
	}
	\tikzset{
		bline/.style ={color = blue, line width =1.5pt}
	} 
	\usetikzlibrary{shapes.geometric}
	
	\begin{figure}[t] 
		\begin{center}  
			\begin{tikzpicture}
				\node at (0,0) {\includegraphics[height=2.9cm, width=9.18cm, viewport=200 275 700 430, clip]{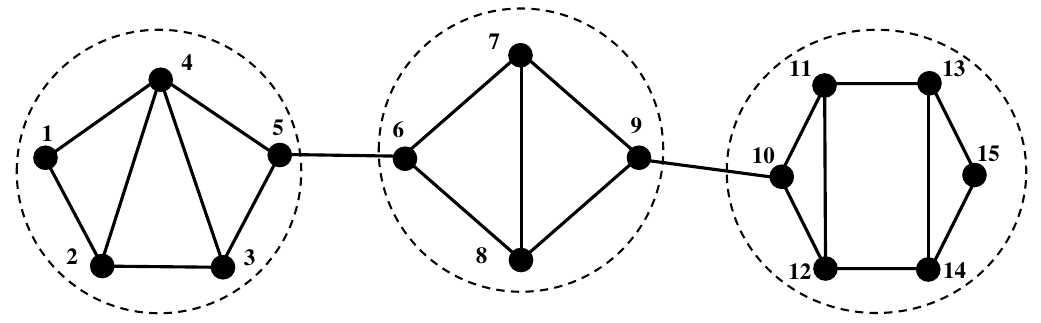}};
			\end{tikzpicture} 
			\caption{An illustration of the bridge-block decomposition. Edges $\left(5,6\right)$ and $\left(9,10\right)$ are identified as bridges since removing either of them increases the number of connected components. After removing the bridge edges, the resulting decomposition $\mathcal{P}^{\text{bbd}}=\left\{\left\{1,2,3,4,5\right\}, \left\{7,8,9,10\right\},\left\{11,12,13,14,15,16\right\}\right\}$.}
			\label{fig:examples}
		\end{center}  
	\end{figure} 
	
	In practice, the cost of obtaining the bridge-block decomposition $\mathcal{P}^{\text{bbd}}$, which involves calculating the thresholded graph, bridges, and clusters, is negligible. With the aforementioned preliminary knowledge, we formally introduce our main result as follows.
	
	\subsection{Main Results}
	\label{subsec:main_results}
	Considering a square matrix $\bm{A}\in\mathbb{S}^{p}$, we define $\bm{A}_{\mathcal{V}_k}\in \mathbb{S}^{ p_k}$ as the principal sub-matrix of $\bm{A}$ keeping the rows and columns indexed by $\mathcal{V}_{k}$, in which $p_k=\left|\mathcal{V}_k\right|$ is the number of nodes in
	$k$-th cluster and we have $p=\sum_k p_k$. For each $i\in \mathcal{V}_{k}$, we mark $\pi\left(i\right)$ as its corresponding index in $\mathcal{V}_{k}$.  Hence, we define $\widehat{\boldsymbol{\Theta}}_{k}$ as the optimal solution of $k$-th sub-problem, i.e.,
	\begin{equation}
		\widehat{\boldsymbol{\Theta}}_{k}=\arg\min_{\boldsymbol{\Theta}_{k}\in\mathcal{M}^{p_{k}}}-\log\det\left(\boldsymbol{\Theta}_{k}\right)+\left\langle \boldsymbol{\Theta}_{k},\bm{S}_{\mathcal{V}_{k}}-\boldsymbol{\Lambda}_{\mathcal{V}_{k}}\right\rangle.
		\label{eq:defined-sub-problem}
	\end{equation} 
	The main result is then given as follows. 
	
	\begin{theorem}
		\label{thm:decomposition_of_GL_MTP2}
		Under Assumption \ref{assu:unique_assumption}, given the bridge-block decomposition of the thresholded graph $\text{supp}\left(\bm{T}\right)$ as $\mathcal{P}^{\text{bbd}}$, and the optimal solution of each sub-problem \eqref{eq:defined-sub-problem} as $\widehat{\boldsymbol{\Theta}}_{k}$,  the optimal solution of Problem \eqref{eq: Graphical Lasso under MTP2}, i.e., $\boldsymbol{\Theta}^{\star}$, can be obtained as
		\begin{equation}
			\label{eq:Theta_form}
			\boldsymbol{\Theta}_{i,j}^{\star}=\begin{cases}
				[\widehat{\boldsymbol{\Theta}}_{k}]_{\pi\left(i\right),\pi\left(i\right)}+\zeta_{i} & \qquad\text{if }i=j\in\mathcal{V}_{k},\\
				[\widehat{\boldsymbol{\Theta}}_{k}]_{\pi\left(i\right),\pi\left(j\right)} & \qquad\text{if }i\neq j\text{\text{ and }}i,j\in\mathcal{V}_{k},\\
				-T_{ij}\big/(S_{ii}S_{jj}-T_{ij}^{2}) & \qquad\text{if}\left(i,j\right)\in\mathcal{B},\\
				0 & \qquad\text{otherwise}.
			\end{cases}
		\end{equation}
		in which $\zeta_{i}=\frac{1}{S_{ii}}\sum_{\left(i,m\right)\in\mathcal{B}}\frac{T_{im}^{2}}{S_{ii}S_{mm}-T_{im}^{2}}$ and $\zeta_i=0$ if $\forall m:\left(i,m\right)\notin\mathcal{B}$.
	\end{theorem}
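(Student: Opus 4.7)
The plan is to prove Theorem~\ref{thm:decomposition_of_GL_MTP2} by constructing the candidate matrix $\boldsymbol{\Theta}^{\star}$ prescribed by \eqref{eq:Theta_form}, showing that it satisfies the KKT conditions of Problem~\eqref{eq: Graphical Lasso under MTP2}, and concluding via the uniqueness of the minimizer guaranteed under Assumption~\ref{assu:unique_assumption}. First I would use the MTP$_2$ identity $\sum_{i\neq j}\Lambda_{ij}|\Theta_{ij}| = -\langle \boldsymbol{\Lambda}, \boldsymbol{\Theta}\rangle$ to rewrite the objective as the smooth program $\min_{\boldsymbol{\Theta}\in\mathcal{M}^p}\, -\log\det\boldsymbol{\Theta} + \langle \boldsymbol{\Theta}, \bm{S} - \boldsymbol{\Lambda}\rangle$, so that the KKT system reduces to (i) $[\boldsymbol{\Theta}^{-1}]_{ii} = S_{ii}$, and (ii) $[\boldsymbol{\Theta}^{-1}]_{ij} \geq S_{ij}-\Lambda_{ij}$ for $i\neq j$, with equality whenever $\Theta_{ij} < 0$. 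Since M-matrix inverses are entry-wise non-negative, (ii) can be sharpened to $[\boldsymbol{\Theta}^{-1}]_{ij} \geq T_{ij}$ under the same equality condition. Each subproblem \eqref{eq:defined-sub-problem} is of exactly this form, so its minimizer $\widehat{\boldsymbol{\Theta}}_k$ automatically satisfies $[\widehat{\boldsymbol{\Theta}}_k^{-1}]_{ii}=S_{ii}$ for $i\in\mathcal{V}_k$ and the analogous complementary-slackness conditions on $\mathcal{V}_k\times\mathcal{V}_k$; these identities are the mechanism by which optimality will be lifted from the subproblems to the full problem.

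The heart of the proof is to compute $(\boldsymbol{\Theta}^{\star})^{-1}$ entry-wise and match it against the KKT requirements. Two distinct bridges cannot connect the same pair of clusters (else neither would be a bridge), so contracting each cluster of $\mathcal{P}^{\text{bbd}}$ to a single vertex turns $\mathcal{B}$ into a forest on $K$ nodes. I would first apply the known component decomposition to reduce to the case where $\text{supp}(\bm T)$ is connected, so this forest is a tree, and then induct on $K$. In the inductive step I pick a leaf cluster $\mathcal{V}_K$ attached to the rest via a unique bridge $(i^{\star}, j^{\star})$ with $j^{\star}\in\mathcal{V}_K$, and partition $\boldsymbol{\Theta}^{\star}$ into $2\times 2$ blocks relative to $(\mathcal{V}\setminus\mathcal{V}_K,\mathcal{V}_K)$, with upper-left block $\bm{A}$, off-diagonal block $\bm{C}$ (carrying a single nonzero at $(i^{\star},j^{\star})$), and lower-right block $\bm{D}=\widehat{\boldsymbol{\Theta}}_K + \zeta_{j^{\star}}\bm{e}_{j^{\star}}\bm{e}_{j^{\star}}^{\top}$. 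A Sherman--Morrison calculation using $[\widehat{\boldsymbol{\Theta}}_K^{-1}]_{j^{\star} j^{\star}}=S_{j^{\star} j^{\star}}$ gives $\bm{C}\bm{D}^{-1}\bm{C}^{\top} = \bigl(T_{i^{\star} j^{\star}}^2/[S_{i^{\star}i^{\star}}(S_{i^{\star}i^{\star}}S_{j^{\star}j^{\star}}-T_{i^{\star}j^{\star}}^2)]\bigr)\bm{e}_{i^{\star}}\bm{e}_{i^{\star}}^{\top}$, i.e.\ precisely the contribution of bridge $(i^{\star},j^{\star})$ to $\zeta_{i^{\star}}$ in \eqref{eq:Theta_form}. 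Hence the Schur complement $\bm{A}-\bm{C}\bm{D}^{-1}\bm{C}^{\top}$ cancels that contribution in $\bm{A}$, reducing it to the candidate matrix for the smaller problem with one fewer cluster and one fewer bridge, to which the inductive hypothesis applies. A symmetric calculation yields $\bm{D}-\bm{C}^{\top}\bm{A}^{-1}\bm{C}=\widehat{\boldsymbol{\Theta}}_K$, so the leaf block of the inverse equals $\widehat{\boldsymbol{\Theta}}_K^{-1}$, while the cross block of the inverse evaluates to $T_{i^{\star} j^{\star}}$ at $(i^{\star},j^{\star})$ and to non-negative values elsewhere.

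With the explicit form of $(\boldsymbol{\Theta}^{\star})^{-1}$ in hand the KKT check becomes routine: diagonal entries reduce cluster-wise to $[\widehat{\boldsymbol{\Theta}}_k^{-1}]_{ii}=S_{ii}$; same-cluster off-diagonals inherit complementary slackness from the subproblem and remain consistent with the original KKT since $T_{ij}=\max(S_{ij}-\Lambda_{ij},0)$ and M-matrix inverses are non-negative; each bridge entry of the inverse equals $T_{ij}$ while the candidate value $-T_{ij}/(S_{ii}S_{jj}-T_{ij}^{2})$ is strictly negative; and every non-bridge cross-cluster entry is non-negative and paired with $\Theta^{\star}_{ij}=0$, $T_{ij}=0$. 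Positive-definiteness of $\boldsymbol{\Theta}^{\star}$ is a by-product of the inductive inversion, and uniqueness of the minimizer under Assumption~\ref{assu:unique_assumption} closes the argument. The main obstacle I expect is the Sherman--Morrison bookkeeping in the inductive step: one must check that after peeling off the leaf bridge $(i^{\star}, j^{\star})$ the residual $\zeta$-correction at $i^{\star}$ in $\bm{A}-\bm{C}\bm{D}^{-1}\bm{C}^{\top}$ still matches the value prescribed by \eqref{eq:Theta_form} restricted to the surviving bridges, which is the telescoping identity that makes the induction close and is particularly delicate when $i^{\star}$ is incident to several bridges.
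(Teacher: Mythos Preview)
Your proposal is correct and takes a genuinely different route from the paper. The paper does not induct; instead it first proves an independent inversion lemma (Theorem~\ref{lem:inverse}) that writes every entry of $(\boldsymbol{\Theta}^{\star})^{-1}$ explicitly: within-cluster entries are $[\widehat{\boldsymbol{\Theta}}_k^{-1}]_{\pi(i),\pi(j)}$, bridge entries are $T_{ij}$, and every cross-cluster entry is given by a ``bridge-path product'' $\sqrt{S_{ii}S_{jj}}\prod_t R_{u_t,u_{t+1}}/\sqrt{S_{u_t u_t}S_{u_{t+1}u_{t+1}}}$ along the unique sequence of bridges between the two clusters. The proof of that lemma is a direct case-by-case verification that $\boldsymbol{\Theta}^{\star}\bm R=\bm I$, and the KKT check then proceeds much as yours. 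What the paper's route buys is a closed-form expression for \emph{every} entry of the optimal covariance, including the far-off cross-cluster ones, which is of independent interest; what your route buys is that it avoids introducing the bridge-path machinery altogether and uses only standard Schur complement/Sherman--Morrison identities, making the argument more self-contained. One presentational caveat: your ``symmetric calculation'' for $\bm D-\bm C^{\top}\bm A^{-1}\bm C=\widehat{\boldsymbol{\Theta}}_K$ is not literally symmetric, since $\bm A$ is not the reduced candidate but the reduced candidate plus the rank-one bridge correction at $i^{\star}$; you need one extra Sherman--Morrison step (using the inductive $[\bm M^{-1}]_{i^{\star}i^{\star}}=S_{i^{\star}i^{\star}}$) to get $[\bm A^{-1}]_{i^{\star}i^{\star}}=(S_{i^{\star}i^{\star}}S_{j^{\star}j^{\star}}-T_{i^{\star}j^{\star}}^{2})/S_{j^{\star}j^{\star}}$, after which the identity follows. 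The telescoping you flag as the main obstacle does work out exactly, and non-negativity of the cross block follows because $\bm A$ (being the reduced M-matrix plus a positive diagonal rank-one term) is again an M-matrix.
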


	\begin{figure}[t]  
		\begin{tikzpicture}
			\node at (-1.2,0) {\includegraphics[height=2.0cm, width=7.8cm, viewport=45 155 745 330, clip]{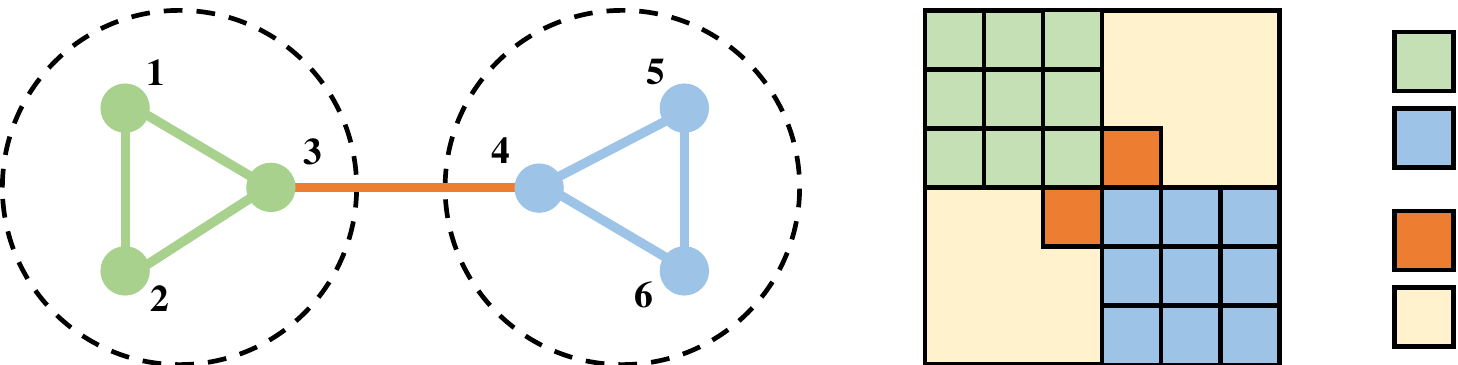}};
			\node at (5.4,0.74) {$\boldsymbol{\Theta}_{\left\{ 1,2,3\right\} }=\widehat{\boldsymbol{\Theta}}_{1}+\text{diag}\left(\zeta_{1},\zeta_{2},\zeta_{3}\right)$.};
			\node at (5.4,0.24) {$\boldsymbol{\Theta}_{\left\{ 4,5,6\right\} }=\widehat{\boldsymbol{\Theta}}_{2}+\text{diag}\left(\zeta_{4},\zeta_{5},\zeta_{6}\right)$.};
			\node at (5.82,-0.3) {$(i,j)\in \mathcal{B}$: $\Theta_{ij}=-T_{ij}\big/\left(S_{ii}S_{jj}-T_{ij}^{2}\right)$.};
			\node at (5.57,-0.8) {$(i,j)\notin \mathcal{B}$ and $\psi(i)\neq \psi(j)$: $\Theta_{ij}=0$.};
			\node[font=\tiny] at (-0.32,0.77) {1};
			\node[font=\tiny] at (-0.32,0.45) {2};
			\node[font=\tiny] at (-0.32,0.13) {3};
			\node[font=\tiny] at (-0.32,-0.19) {4};
			\node[font=\tiny] at (-0.32,-0.51) {5};
			\node[font=\tiny] at (-0.32,-0.83) {6};
			\node[font=\tiny] at (0, 1.08) {1};
			\node[font=\tiny] at (0.31, 1.08) {2}; 
			\node[font=\tiny] at (0.62, 1.08) {3};
			\node[font=\tiny] at (0.93, 1.08) {4};
			\node[font=\tiny] at (1.24, 1.08) {5};
			\node[font=\tiny] at (1.55, 1.08) {6};
		\end{tikzpicture} 
		\caption{An example of obtaining optimal solution via Theorem \ref{thm:decomposition_of_GL_MTP2}. The thresholded graph of $6$ nodes can be partitioned into $2$ clusters $\mathcal{V}_1 = \left\{1,2,3\right\}$ and $\mathcal{V}_2=\left\{4,5,6\right\}$. The optimal solution $\boldsymbol{\Theta}\in \mathbb{S}^{6}$ can be exactly computed via $\textcolor[HTML]{ec7c34}{\rule{0.23cm}{0.23cm}}$ a set of explicit solution $\left\{\Theta_{3,4},\Theta_{4,3}\right\}$, $\textcolor[HTML]{c4e4b4}{\rule{0.23cm}{0.23cm}}$ $\textcolor[HTML]{9dc3e6}{\rule{0.23cm}{0.23cm}}$ solutions of smaller-sized sub-problems, i.e., $\widehat{\boldsymbol{\Theta}}_{1}\in\mathbb{S}^{3}$ and $\widehat{\boldsymbol{\Theta}}_{2}\in\mathbb{S}^{3}$, and $\textcolor[HTML]{fcf4cc}{\rule{0.23cm}{0.23cm}}$ zeros in the rest positions.} 
		\label{fig:bbdexamples} 
	\end{figure}  
	
	\textbf{Comments:} Figure \ref{fig:bbdexamples} depicts an example of how to apply Theorem \ref{thm:decomposition_of_GL_MTP2} to obtain the optimal solution more efficiently. Theoretically, the key to show the optimality of \eqref{eq:Theta_form} is via an explicit expression of the inverse of $\boldsymbol{\Theta}$, i.e., $\bm{R} = \boldsymbol{\Theta}^{-1}$ using following Theorem. 
	\begin{theorem}
		\label{lem:inverse}
		Given $\bm{S},\bm{T}\in \mathbb{S}^p$, the bridge-block decomposition $\mathcal{P}^{\text{bbd}}=\{\mathcal{V}_1,\dots,\mathcal{V}_K\}$ of $\text{supp}\left(\bm{T}\right)$, and a set of matrices $\{ \widehat{\boldsymbol{\Theta}}_{k}\in\mathbb{S}_{++}^{\left|\mathcal{V}_{k}\right|}\} _{k=1}^{K}$, the inverse of $\boldsymbol{\Theta}$ in the form of \eqref{eq:Theta_form} is derived as   
		\begin{equation}
			R_{ij}=\begin{cases}
				[\widehat{\bm{R}}_{k}]_{\pi\left(i\right),\pi\left(j\right)} & \qquad\text{if }i,j\in\mathcal{V}_{k},\\
				T_{ij} & \qquad\text{if }(i,j)\in\mathcal{B},\\
				\sqrt{S_{ii}S_{jj}}\cdot g_{ij}\left(\bm{R}\right) & \qquad\text{otherwise},
			\end{cases}
			\label{eq:Rij} 
		\end{equation}  
		where $\widehat{\bm{R}}_{k}=[\widehat{\boldsymbol{\Theta}}_{k}]^{-1}$ and for each $i$, $j$ in different clusters, $g_{ij}$ is given as
		\begin{equation}
			\begin{array}{ll}
				g_{ij}\left(\bm{R}\right)=\prod_{t=0}^{2T}R_{u_{t},u_{t+1}}\big/\sqrt{S_{u_{t},u_{t}}S_{u_{t+1},u_{t+1}}}, & \text{where }u_{0}\overset{\Delta}{=}i,\text{ }u_{2T+1}\overset{\Delta}{=}j,
			\end{array} 
			\label{eq:pathproduct}
		\end{equation}
		in which $T-1$ is the number of bridges in $d_{ij}$, $g_{ij}=0$ if $d_{ij}=\emptyset$, and $u$ denotes a sequence of `incident' bridges, i.e., $d_{ij}\cap \mathcal{B}  =\left\{\left(u_1,u_2\right),\dots, \left(u_{2T-1},u_{2T}\right) \right\}$ following the orders of $d_{ij}$ while preserving the elements that are bridges. 
	\end{theorem}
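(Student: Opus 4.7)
The plan is to verify $\boldsymbol{\Theta}\bm{R}=\bm{I}$ directly. Since $\boldsymbol{\Theta}$ is nonzero only within clusters and on bridges, the entry $(\boldsymbol{\Theta}\bm{R})_{ij}$ splits into a within-cluster sum over $m\in\mathcal{V}_{\psi(i)}$ and a bridge sum over $m$ with $(i,m)\in\mathcal{B}$. Two ingredients will drive every cancellation: (a) $\widehat{\boldsymbol{\Theta}}_k\widehat{\bm{R}}_k=\bm{I}$ by definition, and (b) $[\widehat{\bm{R}}_k]_{\pi(i),\pi(i)}=S_{ii}$, which follows from the KKT stationarity of sub-problem \eqref{eq:defined-sub-problem} on the (unconstrained) diagonal. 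I would also record the combinatorial fact that contracting each cluster of $\mathcal{P}^{\text{bbd}}$ into a supernode yields a forest (each bridge lies on no cycle and at most one bridge joins any two clusters), so the bridge sequence on any cross-cluster path is uniquely determined.

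I would then treat three cases. For $i=j$, (a)--(b) reduce the within-cluster sum to $1+\zeta_i S_{ii}$, while each bridge contributes $-T_{im}^2/(S_{ii}S_{mm}-T_{im}^2)$, so the bridge sum equals $-\zeta_i S_{ii}$ by the definition of $\zeta_i$, giving $1$. For $i\neq j$ with $\psi(i)=\psi(j)$, the within-cluster sum collapses to $\zeta_i[\widehat{\bm{R}}_{\psi(i)}]_{\pi(i),\pi(j)}$ via (a); for each bridge $(i,m')\in\mathcal{B}$ the sequence defining $g_{m'j}$ begins with $u_0=u_1=m'$ (the factor $R_{m'm'}/S_{m'm'}=1$ by (b)) and ends with $u_2=i,\,u_3=j$, and unwinding the product yields $R_{m'j}=T_{im'}[\widehat{\bm{R}}_{\psi(i)}]_{\pi(i),\pi(j)}/S_{ii}$, so summing against $\Theta_{im'}$ produces exactly the cancelling term $-\zeta_i[\widehat{\bm{R}}_{\psi(i)}]_{\pi(i),\pi(j)}$.

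The case $\psi(i)\neq\psi(j)$ is the substantive one. The unique bridge $(i^*,m^*)$ on the cluster-tree path from $\psi(i)$ to $\psi(j)$ (with $i^*\in\mathcal{V}_{\psi(i)}$) appears in $d_{mj}$ for every $m\in\mathcal{V}_{\psi(i)}$, so unwinding $g_{mj}$ through $i^*$ yields $R_{mj}=[\widehat{\bm{R}}_{\psi(i)}]_{\pi(m),\pi(i^*)}\cdot C$ for a constant $C$ independent of $m$. The within-cluster sum then becomes $C\bigl(\delta_{i,i^*}+\zeta_i[\widehat{\bm{R}}_{\psi(i)}]_{\pi(i),\pi(i^*)}\bigr)$ by (a). When $i\neq i^*$, the same unwinding gives $R_{m'j}=T_{im'}C[\widehat{\bm{R}}_{\psi(i)}]_{\pi(i),\pi(i^*)}/S_{ii}$ for every bridge $(i,m')$, whose sum is $-C\zeta_i[\widehat{\bm{R}}_{\psi(i)}]_{\pi(i),\pi(i^*)}$, cancelling the within-cluster term. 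When $i=i^*$, one uses the relation $C=T_{im^*}R_{m^*j}/(S_{ii}S_{m^*m^*})$, derived by unwinding $g_{m^*j}$ via (b), to show that the distinguished bridge $(i,m^*)$ together with the other bridge terms cancels the leftover $C$ coming from $\delta_{i,i^*}=1$.

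The main obstacle is the path-product bookkeeping in the third case: one has to verify that the bridge sequence $u$ defining $g_{mj}$ is common to every $m\in\mathcal{V}_{\psi(i)}$ (ensured by the tree structure), and then telescope the product to cleanly expose $[\widehat{\bm{R}}_{\psi(i)}]_{\pi(m),\pi(i^*)}$ against an $m$-independent remainder $C$. The identity $R_{uu}=S_{uu}$ from (b) is indispensable here, since it trivializes the repeated entries (such as $u_0=u_1$ or $u_{2s}=u_{2s+1}$) that arise whenever the path begins at or passes through a bridge endpoint.
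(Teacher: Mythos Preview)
Your approach is correct and essentially the same as the paper's: both verify $\boldsymbol{\Theta}\bm{R}=\bm{I}$ entrywise, using $\widehat{\boldsymbol{\Theta}}_k\widehat{\bm{R}}_k=\bm{I}$, the diagonal identity $[\widehat{\bm{R}}_k]_{\pi(i),\pi(i)}=S_{ii}$, and the factorization of $R_{ij}$ through bridge endpoints. The only organizational difference is in the cross-cluster case: the paper factors through the bridge endpoint $u_{2T}\in\mathcal{V}_{\psi(j)}$ via the single identity $R_{im}=R_{i,u_{2T}}R_{u_{2T},m}/S_{u_{2T},u_{2T}}$ (their Corollary~A.2), so that the remaining sum $\sum_m R_{u_{2T},m}\Theta_{mj}$ is literally the already-established same-cluster case; this avoids your split into $i=i^{*}$ versus $i\neq i^{*}$ and the separate treatment of the distinguished bridge $(i,m^{*})$. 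You also silently assume a path from $i$ to $j$ exists in the cross-cluster case; the paper handles the no-path sub-case explicitly (then $R_{im}=0$ for all relevant $m$), which is trivial but worth stating.
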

	
	We note that all terms in \eqref{eq:pathproduct} can be computed via \eqref{eq:Rij} and Theorem \ref{lem:inverse} itself generally holds for arbitrary $\bm{S}$, $\bm{T}$, and $\{\widehat{\boldsymbol{\Theta}}_{k}\}_{k=1}^K$. Detailed proofs and discussions are deferred to Appendix \ref{sec:path_product}. 
	
	Theorem \ref{lem:inverse} is theoretically critical as it provides an explicit form of the inverse of $\boldsymbol{\Theta}$, which was previously difficult to compute. It is vital for demonstrating the optimality of $\boldsymbol{\Theta}$ by connecting the KKT system of the original large-scale problem to the KKT systems of smaller sub-problems through the bridge-block decomposition. Consequently, together with $\text{MTP}_2$ properties, we show that $\boldsymbol{\Theta}$ in the form of \eqref{eq:Theta_form} satisfies the KKT condition of Problem \eqref{eq: Graphical Lasso under MTP2} in Appendix \ref{sec:prove_main}.
	
	\textbf{The role of} $\text{MTP}_2$ \textbf{constraints:} Although Theorem \ref{lem:inverse} can be applied without requiring $\text{MTP}_2$ properties, it is important to highlight that these properties serve as sufficient conditions to demonstrate the optimality of $\boldsymbol{\Theta}=\bm{R}^{-1}$. By incorporating the $\text{MTP}_2$ constraints, the non-smoothness in graphical lasso is eliminated, resulting in simplified optimality conditions as shown below:
	\begin{subequations}
		\begin{align}
			\forall i:\quad & -R_{ii}+S_{ii}=0 & -R_{ii}+S_{ii}=0  \\
			\forall\Theta_{ij}\neq0:\quad & -R_{ij}+S_{ij}+\lambda_{ij}\cdot\text{sign}(\Theta_{ij})=0 \quad\quad \Longrightarrow & -R_{ij}+S_{ij}-\lambda_{ij}=0  \\
			\forall\Theta_{ij}=0:\quad & \left|-R_{ij}+S_{ij}\right|\leq\lambda_{ij} &    -R_{ij}+S_{ij}-\lambda_{ij}\leq 0
		\end{align}
	\end{subequations} 
	Simultaneously, $\bm{R}$ becomes non-negative ($\geq \bm{0}$). As a result, The most challenging part of the KKT conditions $-R_{ij}+S_{ij}-\lambda_{ij}\leq0$ holds under MTP$_2$ properties. In Appendix \ref{sec:extension}, we will elaborate the details and explore additional conditions under which we can extend the applicability of Theorem \ref{thm:decomposition_of_GL_MTP2} to the traditional graphical lasso.

	\textbf{Proposed solving framework:} In practical applications, it is often more efficient to employ the bridge-block decomposition technique instead of directly manipulating full-sized matrices. This approach involves breaking down the problem into smaller isolated sub-problems. By addressing these sub-problems individually, we can compute the optimal solution by utilizing the solutions obtained from each sub-problem, as outlined in Theorem \ref{thm:decomposition_of_GL_MTP2}. This approach offers several advantages:
	\begin{itemize} [topsep=0pt, leftmargin=*]
		\setlength\itemsep{0em}
		\item \textbf{Significant reduction in computational cost}. Foremost, the cost of bridge-block decomposition is cheap. Suppose that we use a BCD solver of complexity $\mathcal{O}\left( p^4\right) $, then the total cost is reduced to $\sum_{k=1}^{K}\mathcal{O}(|\mathcal{V}_{k}|^{4})\ll\mathcal{O}(p^{4})$, where  $\sum_k \left|\mathcal{V}_{k}\right| = p$. This can prompt an enormous difference. 
		\item \textbf{Considerable reduction in memory cost}. The memory cost is typically troublesome for large-scale problems as each full-dimensional matrix contains $p^2$ elements. Theorem \ref{thm:decomposition_of_GL_MTP2} can avoid generating a number of full-dimensional intermediate variables during computation.
		\item \textbf{Potential speed-up via parallel computing}. The sub-problems can be optimized independently, which allows parallel computing for significant speed-up.  
	\end{itemize} 
	
	\subsection{Connection to Existing Research}
	\label{subsec:closed-form}
	From Theorem \ref{thm:decomposition_of_GL_MTP2}, we obtain a very interesting result that the $\left( i,j\right)$-th entries of $\boldsymbol{\Theta}$ admits an explicit solution, i.e., $\Theta_{ij}=-T_{ij} \big/ \left( S_{ii}S_{jj}-T_{ij}^{2}\right)$ if edge $\left( i,j\right) $ is a bridge in the thresholded graph $\text{supp}\left( \bm{T}\right) $. To the best of our knowledge, this result, shown in Corollary \ref{cor:bridge_exits}, is the first sufficient condition for an edge belonging to optimal graph $\text{supp}\left(\boldsymbol{\Theta}^\star \right)$.
	
	\begin{corollary}
		\label{cor:bridge_exits}
		If edge $\left(i,j\right)$ is a bridge in the thresholded graph $\text{supp}\left(\bm{T}\right)$, then $\left(i,j\right)$ is also a bridge in the optimal graph $\text{supp}\left(\boldsymbol{\Theta}^{\star}\right)$.
	\end{corollary}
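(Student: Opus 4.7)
The plan is to derive Corollary \ref{cor:bridge_exits} as a direct consequence of Theorem \ref{thm:decomposition_of_GL_MTP2} together with the known containment $\text{supp}(\boldsymbol{\Theta}^{\star})\subseteq \text{supp}(\bm{T})$ established in \cite{lauritzen2019maximum}. The argument splits into two parts: first show that $(i,j)$ is indeed an edge of the optimal graph, and then show that deleting it disconnects its endpoints.

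First, I would verify that $(i,j)\in \text{supp}(\boldsymbol{\Theta}^{\star})$. Since $(i,j)$ is a bridge in $\text{supp}(\bm{T})$, the edge is present in the thresholded graph, so $T_{ij}=S_{ij}-\Lambda_{ij}>0$. Assumption \ref{assu:unique_assumption} yields $T_{ij}\leq S_{ij}<\sqrt{S_{ii}S_{jj}}$, which makes the denominator $S_{ii}S_{jj}-T_{ij}^{2}$ strictly positive. The third case of \eqref{eq:Theta_form} in Theorem \ref{thm:decomposition_of_GL_MTP2} then gives $\Theta^{\star}_{ij}=-T_{ij}/(S_{ii}S_{jj}-T_{ij}^{2})<0$, which in particular is non-zero, so $(i,j)$ is an edge of $\text{supp}(\boldsymbol{\Theta}^{\star})$.

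Second, I would argue by contradiction that $(i,j)$ is a bridge of $\text{supp}(\boldsymbol{\Theta}^{\star})$. Suppose it is not; then $(i,j)$ lies on some cycle of $\text{supp}(\boldsymbol{\Theta}^{\star})$, which provides an alternative path from $i$ to $j$ inside $\text{supp}(\boldsymbol{\Theta}^{\star})\setminus \{(i,j)\}$. Using the containment $\text{supp}(\boldsymbol{\Theta}^{\star})\subseteq \text{supp}(\bm{T})$, this alternative path also lies in $\text{supp}(\bm{T})\setminus\{(i,j)\}$, contradicting the fact that removing the bridge $(i,j)$ disconnects $i$ from $j$ in $\text{supp}(\bm{T})$. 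Hence $(i,j)$ must be a bridge of $\text{supp}(\boldsymbol{\Theta}^{\star})$.

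The only delicate point is the containment $\text{supp}(\boldsymbol{\Theta}^{\star})\subseteq \text{supp}(\bm{T})$. This can be cited directly from \cite{lauritzen2019maximum}; alternatively, it can be recovered from Theorem \ref{thm:decomposition_of_GL_MTP2} itself: for any $(i,j)\notin \mathcal{B}$ with $\psi(i)\neq \psi(j)$ the formula returns $0$, and inside each cluster the same containment holds for the sub-problem \eqref{eq:defined-sub-problem} because $\bm{S}_{\mathcal{V}_{k}}-\boldsymbol{\Lambda}_{\mathcal{V}_{k}}$ has the same thresholded support as $\bm{T}_{\mathcal{V}_{k}}$. With this containment established, neither the actual size of $\Theta^{\star}_{ij}$ nor the structure of $\widehat{\boldsymbol{\Theta}}_{k}$ needs to be analyzed further, and the corollary follows almost immediately from the definition of a bridge.
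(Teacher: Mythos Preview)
Your proposal is correct and follows essentially the same argument as the paper: the paper uses $\Theta^{\star}_{ij}\neq 0$ (from the explicit bridge formula in Theorem~\ref{thm:decomposition_of_GL_MTP2}) together with $\text{supp}(\boldsymbol{\Theta}^{\star})\subseteq \text{supp}(\bm{T})$ to conclude that the unique $i$--$j$ path $\{(i,j)\}$ in the thresholded graph remains the unique such path in the optimal graph, which is exactly your contradiction argument phrased directly. Your treatment is slightly more detailed in justifying $\Theta^{\star}_{ij}\neq 0$ via Assumption~\ref{assu:unique_assumption}, but the route is the same.
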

	
	Corollary \ref{cor:bridge_exits} can be obtained as follows. By the definition of a bridge, from nodes $i$ to $j$ the path $d_{ij} = \{(i,j)\}$ is the only path in the thresholded graph, and removing it would result in an increase in the number of components. Given that $\text{supp}(\boldsymbol{\Theta}^\star)\subseteq \text{supp}(\bm{T})$ and $\Theta_{ij}\neq 0$, it follows that $d_{ij} = \{(i,j)\}$ remains the unique path from nodes $i$ to $j$ in the optimal graph. This indicates that the edge $(i,j)$ continues to function as a bridge.
	
	In the literature, only some necessary conditions for edges in the thresholded graph $\text{supp}\left(\bm{T}\right)$ to be retained in the optimal graph $\text{supp}\left(\boldsymbol{\Theta}^\star \right)$ have been established. For instance, \cite{lauritzen2019maximum} and \cite{pavez2018learning} indicate that $
	\text{supp}\left( \boldsymbol{\Theta}^\star\right) \subseteq \text{supp}\left(\bm{T} \right)$, which implies that $\left(i,j \right) \notin \text{supp}\left( \boldsymbol{\Theta}^\star\right)$ if $\left(i,j \right) \notin \text{supp}\left(\bm{T} \right) $.

	Another explicit solution is also applicable when $k$-th cluster only contains one node, i.e.,  $\mathcal{V}_k=\left\lbrace i \right\rbrace $, then $\widehat{\boldsymbol{\Theta}}_k=1/S_{ii}$. As a consequence, Theorem \ref{thm:decomposition_of_GL_MTP2} generalizes to the following corollary for acyclic graph which admits a bridge-block decomposition into $p$ clusters.
	\begin{corollary}
		\label{cor:acyclic}
		Suppose that the thresholded graph $\text{supp}\left( \bm{T}\right)$ is acyclic, then the optimal graph $\text{supp}\left( \boldsymbol{\Theta}^\star\right) $ is also acyclic and $\boldsymbol{\Theta}^\star$ admits a closed-form solution as 
		\begin{equation}
			\Theta_{ij}^{\star}=\begin{cases}
				\frac{1}{S_{ii}}\left(1+\sum_{m\in\mathcal{N}\left(i\right)}\frac{T_{im}^{2}}{S_{ii}S_{mm}-T_{im}^{2}}\right) & \text{if }i=j,\\
				-\frac{T_{ij}}{S_{ii}S_{jj}-T_{ij}^{2}} & \text{if }\left(i,j\right)\in\mathcal{B},\\
				0 & \text{otherwise}.
			\end{cases}
		\end{equation} 
	\end{corollary}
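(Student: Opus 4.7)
The plan is to derive Corollary \ref{cor:acyclic} as a direct specialization of Theorem \ref{thm:decomposition_of_GL_MTP2}, rather than re-proving anything from scratch. The key observation is that when the thresholded graph $G = \text{supp}(\bm{T})$ is acyclic, every edge of $G$ is necessarily a bridge (since an edge can fail to be a bridge only if it lies on a cycle), so $\mathcal{B} = \mathcal{E}$. Consequently, removing $\mathcal{B}$ from $G$ leaves the edgeless graph on $\mathcal{V}$, and the components of $(\mathcal{V},\mathcal{E}\setminus\mathcal{B})$ are exactly the $p$ singletons. Therefore the bridge-block decomposition collapses to $\mathcal{P}^{\text{bbd}} = \{\{1\},\{2\},\ldots,\{p\}\}$, with $K = p$.

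Next I would evaluate the sub-problems \eqref{eq:defined-sub-problem} on each singleton $\mathcal{V}_k=\{i\}$. Since $\text{diag}(\boldsymbol{\Lambda})=\bm 0$, the sub-problem reduces to the scalar minimization
\begin{equation*}
\widehat{\boldsymbol{\Theta}}_k = \arg\min_{\theta>0}\; -\log\theta + \theta\, S_{ii},
\end{equation*}
whose unique minimizer is $\widehat{\boldsymbol{\Theta}}_k = 1/S_{ii}$, consistent with the remark preceding the statement. Plugging these scalar solutions into \eqref{eq:Theta_form}: the off-diagonal clause "$i\neq j$ and $i,j\in\mathcal{V}_k$" is vacuous (clusters are singletons), the bridge clause gives $\Theta^\star_{ij} = -T_{ij}/(S_{ii}S_{jj}-T_{ij}^2)$ for $(i,j)\in\mathcal{B}$, and the "otherwise" clause gives $0$. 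For the diagonal entries, $[\widehat{\boldsymbol{\Theta}}_k]_{\pi(i),\pi(i)} = 1/S_{ii}$ and, because every neighbor $m$ of $i$ corresponds to a bridge $(i,m)\in\mathcal{B}$ (all edges are bridges), the correction $\zeta_i$ sums precisely over $m\in\mathcal{N}(i)$. Combining,
\begin{equation*}
\Theta^\star_{ii} = \frac{1}{S_{ii}} + \frac{1}{S_{ii}}\sum_{m\in\mathcal{N}(i)}\frac{T_{im}^2}{S_{ii}S_{mm}-T_{im}^2},
\end{equation*}
which after factoring $1/S_{ii}$ matches the claimed formula.

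Finally, to justify the acyclicity of $\text{supp}(\boldsymbol{\Theta}^\star)$, I would observe that the explicit formula above forces $\Theta^\star_{ij}=0$ whenever $(i,j)\notin\mathcal{B} = \mathcal{E}$, so $\text{supp}(\boldsymbol{\Theta}^\star)\subseteq\text{supp}(\bm{T})$; since any subgraph of an acyclic graph is acyclic, the conclusion follows. There is no genuine obstacle in this corollary: once one notices that acyclicity forces $\mathcal{B}=\mathcal{E}$ and hence the bridge-block decomposition collapses to singletons, the result is an immediate unpacking of Theorem \ref{thm:decomposition_of_GL_MTP2}. The only step worth double-checking is the bookkeeping for $\zeta_i$, namely that summation over bridges incident to $i$ coincides with summation over $\mathcal{N}(i)$, which is precisely where the acyclicity assumption is used.
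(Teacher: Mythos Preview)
Your proposal is correct and mirrors the paper's own argument exactly: the paper states that ``the proof directly follows Theorem \ref{thm:decomposition_of_GL_MTP2}'' after noting that an acyclic thresholded graph has a bridge-block decomposition into $p$ singleton clusters with $\widehat{\boldsymbol{\Theta}}_k = 1/S_{ii}$. Your unpacking of the diagonal term via $\zeta_i$ and the acyclicity of $\text{supp}(\boldsymbol{\Theta}^\star)$ via $\text{supp}(\boldsymbol{\Theta}^\star)\subseteq\text{supp}(\bm{T})$ is exactly the intended bookkeeping.
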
  
	The proof directly follows Theorem \ref{thm:decomposition_of_GL_MTP2}. Corollary \ref{cor:acyclic} coincides with \cite[Theorem 3]{pavez2018learning} and \cite{sojoudi2016equivalence}. Compared to our results in Theorem \ref{thm:decomposition_of_GL_MTP2}, explicit solutions in literature heavily depend on the graph structure. Hence, our theory appears as the first to reveal that, fundamentally, it is the edge property, i.e., whether the edge is a bridge, that decides the existence of explicit solutions.
	
	In conclusion, our proposed methods generalize the existing results with more profound understandings, offering significant potential for solving large-scale precision matrix estimation problems.

	\section{Numerical Simulations}
	\label{sec:numerical_simulations}
	We conduct experiments on synthetic and real-world data to evaluate how the proposed method accelerates the convergence of existing algorithms compared to algorithms that do not exploit it. All experiments were conducted on 2.60GHZ Xeon Gold 6348 machines and Linux OS. All methods are implemented in MATLAB and the state-of-the-art methods we consider includes
	\begin{itemize} [topsep=0pt, leftmargin=*]
		\setlength\itemsep{0em}
		\item \textbf{BCD}: Block Coordinate Descent \cite{slawski2015estimation} of complexity $\mathcal{O}(p^{4})$.
		\item \textbf{PGD}: Projected Gradient Descent \cite{ying2023adaptive} of complexity $\mathcal{O}(p^{3})$.
		\item \textbf{PQN-LBFGS}:  A projected quasi-Newton method with limited memory BFGS \cite{kim2010tackling}. The complexity is $\mathcal{O}((m+p)p^{2})$, where $m$ is the number of iterations stored for approximating the Hessian. 
		\item \textbf{FPN}: Fast projected Newton-like method \cite{cai2021fast}  of complexity $\mathcal{O}(p^{3})$. 
	\end{itemize}
	Note that all methods converge to the optimal solution. The target here is not to compare these methods but to evaluate how our proposed framework promotes the efficiency of these methods.

	\subsection{Synthetic Data Experiments}
	\label{subsec:first}
	We use the processes described in \cite{slawski2015estimation} to generate the data. We begin with an underlying graph that has an adjacency matrix $\bm{A}\in\mathbb{S}^p$, and define $\boldsymbol{\Theta}=\delta\bm{I}-\bm{A}$, where $\delta=1.05\cdot\lambda_{\max}\left(\bm{A}\right)$ and $\lambda_{\max}\left(\bm{A}\right)$ represents the largest eigenvalue of $\bm{A}$. his ensures that $\boldsymbol{\Theta}$ is a positive definite matrix with off-diagonal elements being negative, making $\boldsymbol{\Theta}$ a randomly generated $M$-matrix. Next, we normalize $\boldsymbol{\Theta}$ by substituting it with $\bm{D}\boldsymbol{\Theta}\bm{D}$, where $\bm{D}$ is a suitably chosen diagonal matrix, ensuring that $\text{diag}( \boldsymbol{\Theta}^{-1}) =\bm{1}$. We then sample $n=10p$ data points from a Gaussian distribution $\mathcal{N}( \bm{0},\boldsymbol{\Theta}^{-1}) $ and calculate the sample covariance matrix as $\bm{S}$.
	
	Following \cite{ying2021minimax}, we construct the regularization matrix $\boldsymbol{\Lambda}$. Briefly, given an initial estimate $\boldsymbol{\Theta}^{(0)}$, we set $\Lambda_{ij}=\chi\big/(\Theta_{ij}^{(0)}+\epsilon)$ when $i\neq j$ and $\Lambda_{ij}=0$ when $i=j$. Here, $\chi>0$ determines the sparsity level and $\varepsilon$ is a small positive constant, such as $10^{-3}$. Generally, a large penalty is expected on $\Theta_{ij}$ when ${\Theta}_{ij}^{(0)}$ is small. We efficiently compute $ {\boldsymbol{\Theta}}^{(0)}$ using the following equation:
	\begin{equation}
		{\Theta}_{ij}^{(0)}=-T_{ij}\backslash \left(S_{ii}S_{jj}-T_{ij}^{2}\right),\quad \forall i\neq j,
	\end{equation}
	and then appropriately adjust the value of $\chi$ so that $\text{supp}\left( \bm{T}\right) $ has only one connected component, while $\text{supp}\left( \boldsymbol{\Theta}^{\star}\right) $ can roughly recover the underlying structure.
	
	\begin{figure*}[t] 
		\begin{center}
			\centerline{\includegraphics[width=\linewidth]{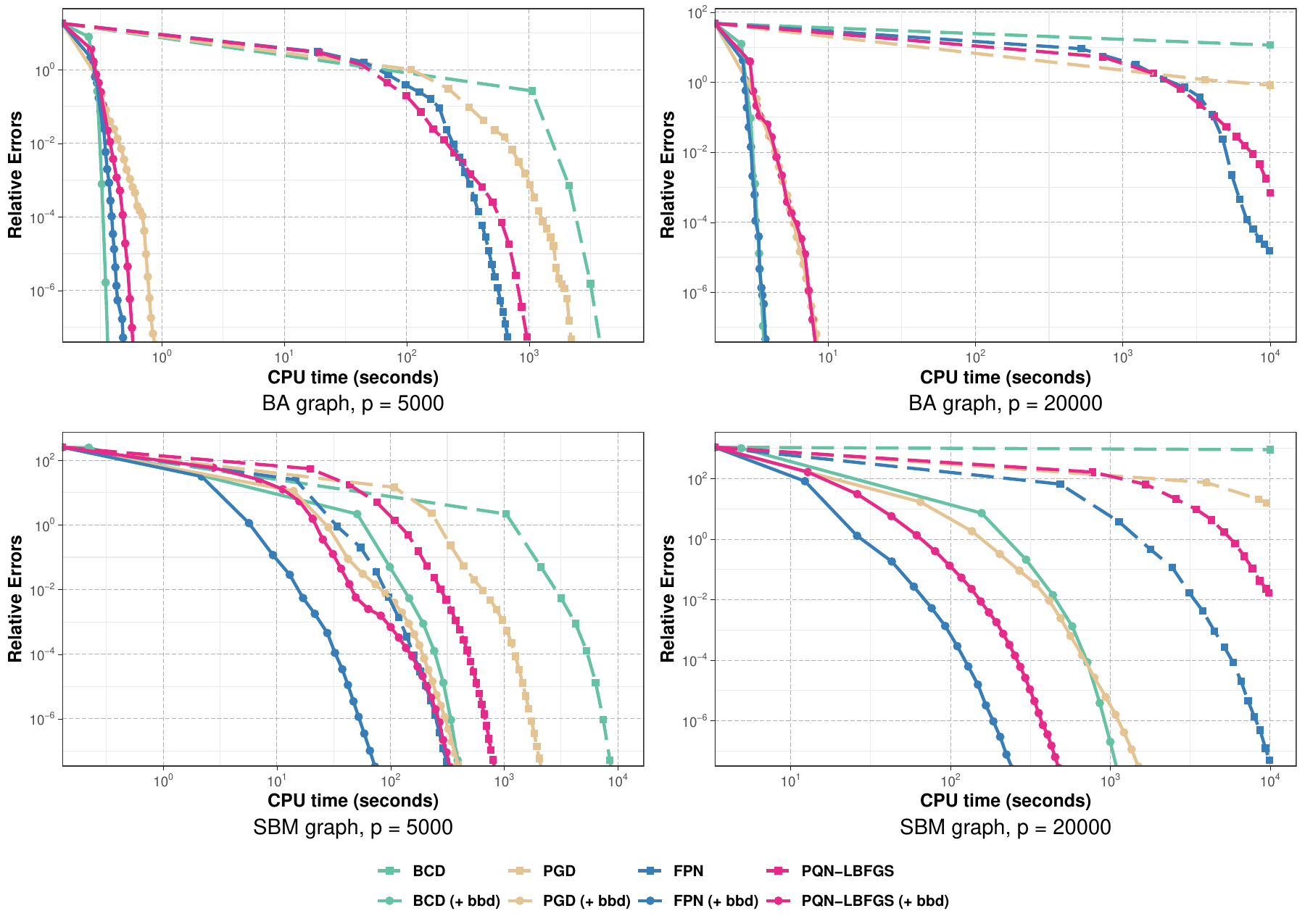}} 
			\caption{Relative errors of the objective values versus the computational time. Colors are used for distinguishing the state-of-the-art methods. Solid circle symbols stand for methods accelerated by bridge-block decomposition (bbd) and square symbols stand for methods without acceleration. }
			\label{fig:p20000}
		\end{center} 
	\end{figure*} 
	
	We consider two scale-free random graphs as underlying with their typical structures in Figure \ref{fig:BA_SBM_examples}. \medskip{}\\
	\textbf{1. Barabasi-Albert (BA)} graph \cite{barabasi1999emergence,albert2002statistical} of order one. BA models generate random scale-free networks via preferential attachment, suitable for modeling various networks like the Internet, world wide web, protein interactions, citations, and social/online networks. \medskip{}\\
	\textbf{2. Stochastic Block Model (SBM)} of networks \cite{holland1983stochastic}, a.k.a. the community graph \cite{fortunato2010community}. Stochastic block models serve as fundamental tools in network science, creating random networks based on community structures, where nodes within the same group are more likely to form connections. \medskip{}\\ 
	In the experiments, we consider $p=5000$ and $p=20000$ and calculate the relative errors as follows:
	\begin{equation}
		\text{RE}\left(\boldsymbol{\Theta}\right)=\left|f\left(\boldsymbol{\Theta}\right)-f\left(\boldsymbol{\Theta}^{\star}\right)\right|\big/\left|f\left(\boldsymbol{\Theta}^{\star}\right)\right|,
	\end{equation}
	where we compare the relative errors at each iteration against the computational time. Here, $\boldsymbol{\Theta}^{\star}$ represents the optimal solution, and $f$ signifies the objective function of \eqref{eq: Graphical Lasso under MTP2}. For methods employing bridge-block decomposition (bbd), we first calculate $\widehat{\boldsymbol{\Theta}}_{k}^{\left(s\right)}$ for each cluster at the $s$-th iteration, followed by calculating the intermediate solution $\boldsymbol{\Theta}^{\left(s\right)}$ using \eqref{eq:Theta_form}. The computational time at the $s$-th iteration is then the sum of the costs to obtain $\mathcal{P}^{\text{bbd}}$, calculate all $\widehat{\boldsymbol{\Theta}}_{k}^{\left(s\right)}$, and compute $\boldsymbol{\Theta}^{(s)}$ via \eqref{eq:Theta_form}.
	
	\begin{table}[b]
		\vskip -10pt
		\caption{Average computational time (seconds) of extra cost.}
		\label{tab:avg_pre_time}
		\centering
		\begin{tabular}{ccccc}
			\toprule
			& \multicolumn{2}{c}{Conducting Bridge-block decomposition$\quad$$\quad$} & \multicolumn{2}{c}{$\quad$$\quad$Computing $\boldsymbol{\Theta}$ from $\{\widehat{\boldsymbol{\Theta}}_k\}_{k=1}^K$$\quad$} \\
			\cmidrule(r){1-5}
			& $\quad\quad p=5k \quad\quad$ & $p=20k$ & $\quad\quad p=5k \quad\quad$& $p=20k$ \\  
			\midrule
			BA & $0.2$ & $2.2$ & $0.0$ & $0.0$ \\ 
			SBM & $0.3$ & $2.3$ & $0.1$ & $0.2$ \\
			\bottomrule
		\end{tabular}
	\end{table}

	The results for $p=5000$ and $p=20000$ are displayed in Figure \ref{fig:p20000}, with all results averaged over five realizations. We highlight that the extra computational cost compared to methods without acceleration is negligible as shown in Table \ref{tab:avg_pre_time}.  From the figures, we observe that: 
	\begin{itemize} [topsep=0pt, leftmargin=*]
		\setlength\itemsep{0em}
		\item \textbf{Significant Speed-up:} Our proposed framework substantially accelerates state-of-the-art methods by one to four orders of magnitude.  
		\item \textbf{Solving Otherwise Infeasible Problems:} When $p=20000$, existing methods cannot optimize the problem within $10^4$ seconds. In contrast, our framework greatly speeds up convergence, making it possible to solve large-scale problems.  
		\item \textbf{Higher Speed-up for Sparser Graph:} Proposed method achieves greater acceleration on the BA graph than the SBM graph, as the former exhibits a stronger sparsity pattern.
		\item \textbf{Higher Speed-up for Methods of Higher-complexity: }Figure \ref{fig:p20000} demonstrates that our method has a more significant impact on improving the efficiency of the BCD method, which has higher computational complexity and thus benefits more from dimension reduction.
	\end{itemize}
	To further shed light on the factors that determine the magnitudes of improvement, in the next sub-section, we conduct additional experiments that dive into how bridge-block decomposition boosts the performance of existing methods subject to different structures of the thresholded graph. 
	
	\begin{figure}[t] 
		\begin{minipage}{0.49\linewidth}
			\centering
			\vskip -6pt
			\begin{tikzpicture}
				\node (main1) at (-0.2,1) {\includegraphics[height=3.0cm, width=3.2cm, viewport=90 80 370 370, clip]{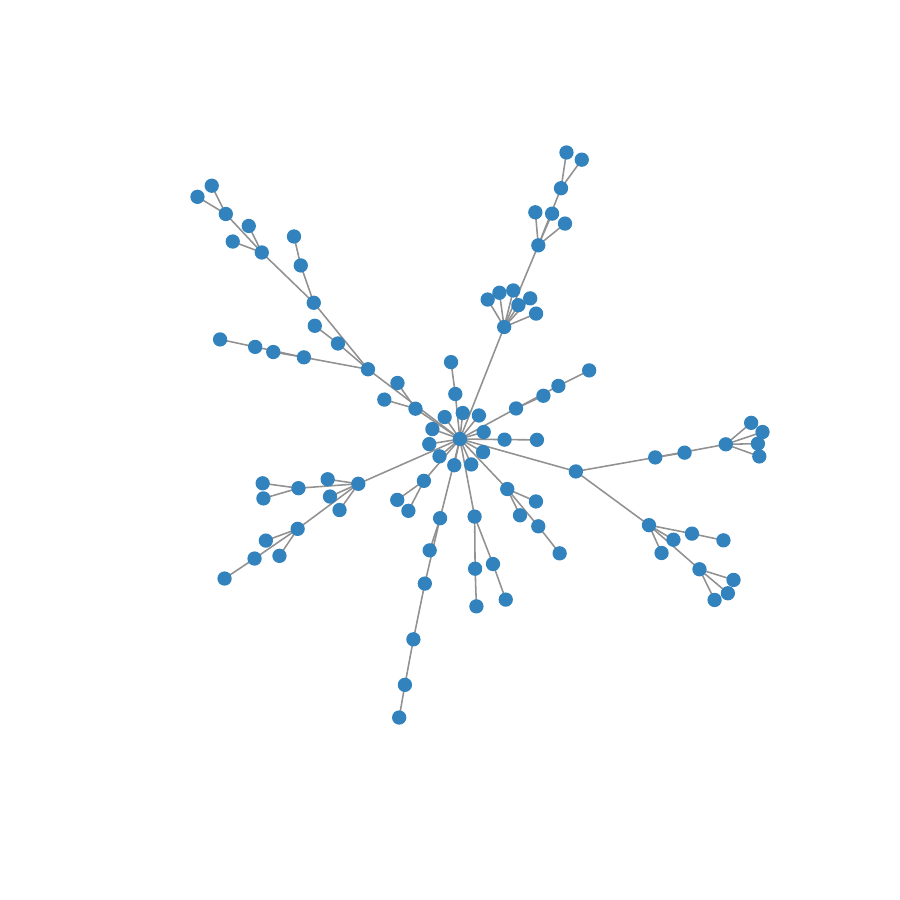}};
				\node (main2) at (3.0,1) {\includegraphics[height=3.0cm, width=3.8cm, viewport=90 80 405 370, clip]{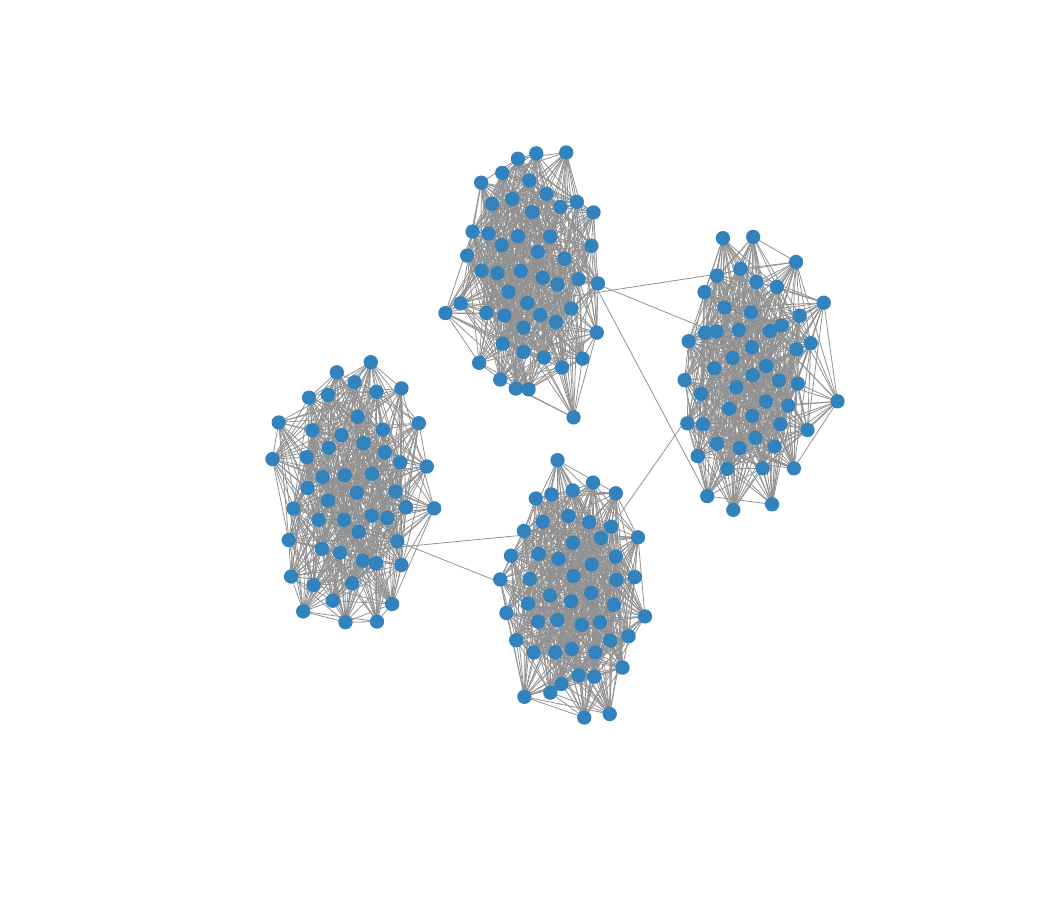}};
				\node[fill=white, scale = 0.9,  text=black, overlay] at (0, -1.2) {(1) BA graph};
				\node[fill=white, scale = 0.9,  text=black, overlay] at (3.1, -1.2) {(2) SBM graph};
			\end{tikzpicture}
			\vskip 0.7cm
			\caption{Structures of BA graph and SBM graph. }
			\label{fig:BA_SBM_examples}
			\vskip -4pt 
		\end{minipage}\hspace{0.3cm} 
		\begin{minipage}{0.48\linewidth}
			\centering
			\includegraphics[width=\columnwidth]{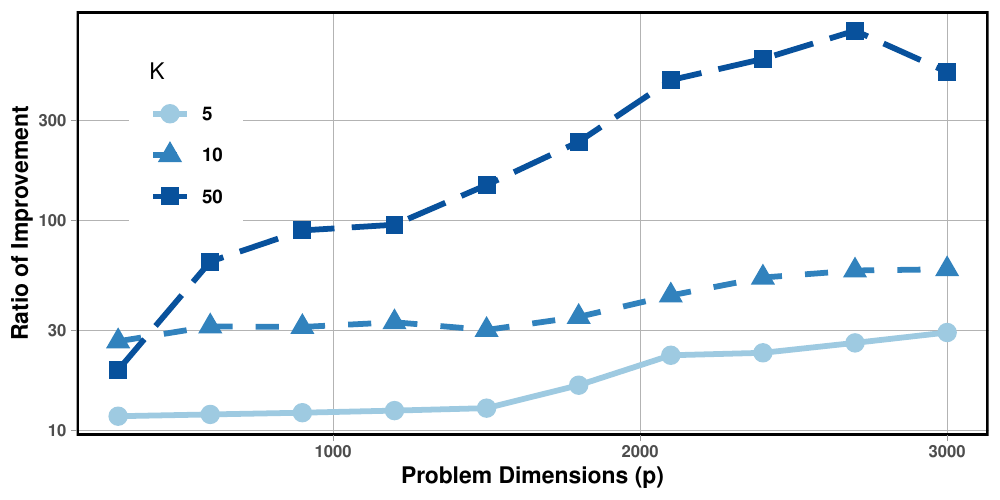}
			\caption{Ratios of improvements for BCD method with respect to computational time.}
			\label{fig:ratios}
		\end{minipage}%
		
	\end{figure}

	\subsection{Further Experiments on Ratios of Improvement}
	\label{subsec:further-discussion}
	We investigate a scenario where the underlying graph is a community graph with a block-tridiagonal adjacency matrix, and each cluster contains an equal number of nodes ($p_{1}=|\mathcal{V}_{k}|,\forall k$). Without loss of generality, we assume that the internal edges within clusters form a cycle. To generate $\bm{S}$ and $\boldsymbol{\Lambda}$, we follow the procedures in Section \ref{subsec:first}. We then modify $\boldsymbol{\Lambda}$ to $\alpha \left( \bm{1}\bm{1}^T -\bm{A}\right) + \left( 1-\alpha\right) \boldsymbol{\Lambda}$, ensuring that $\text{supp}\left( \bm{T}\right) =\text{supp}\left( \bm{A}\right)$, where $\alpha \in \left[ 0,1\right] $ and $\bm{A}$ is the adjacency matrix of the considered graph.
	
	We experiment with various values of $K$ (the number of clusters) and $p_{1}$ (the number of nodes in each cluster). For each configuration, we calculate the ratio of improvement, defined as the quotient of the time required to achieve $\text{RE}\left( \boldsymbol{\Theta}\right) <10^{-6}$ without using bridge-block decomposition to the time when applying the decomposition. We conduct multiple trials of the BCD method and display the results in Figure \ref{fig:ratios}. The results in Figure \ref{fig:ratios} suggest that the number of sub-graphs $K$ is a primary factor influencing the ratio values. The improvement is deemed tremendous, especially for large-scale settings. The involvement of bridge-block decomposition enables solving an $\text{MTP}_2$ GGMs with hundreds of millions of variables as long as we can conduce bridge-block decomposition. Consequently, many previously unattainable real-life applications can now be optimized.

	\subsection{Real-word Data Experiment}
	
	We consider learning the $\text{MTP}_2$ GGM for the Crop Image dataset available from the UCR Time Series Archive \cite{dau2019ucr}. This dataset comprises $p=24000$ pixels, where each pixel corresponds to a time series record capturing spectral information in $n=46$ instances. These instances represent geometrically and radiometrically corrected satellite images. By examining this dataset, we can observe the temporal changes in the observed areas through the recorded measurements. To facilitate our analysis, the data has been pre-processed by \cite{Tan2017-SDM} using an indexing technique. This preprocessing step enables us to work with compressed informative indexes instead of handling large image files.
	
	Our goal is to perform graph-based clustering to the time series data that contains 46 observations, denoted as  $\left\{ \bm{y}_{1},\dots,\bm{y}_{46}\right\}$ using $\text{MTP}_2$ GGMs, where $\bm{y}_i\in \mathbb{R}^{24000}$. The data includes $24$ classes corresponding to different indexed land covers, such as corn, barley, or lake. It is important to note that these labels are not involved in the clustering process but help evaluate the quality of the final graph-based clusters by computing the graph modularity \cite{newman2006modularity}. Via analyzing the data, in Appendix \ref{sec:apdex-realexperiments}, we discuss that why learning $\text{MTP}_2$ GGMs is statistically meaningful in the context of clustering and show that the $\text{MTP}_2$ assumptions approximately hold for this dataset.
	
	The initial estimate ${\boldsymbol{\Theta}}^{(0)}$ is computed using the method proposed in \cite{nie2016constrained}. The regularization matrix $\boldsymbol{\Lambda}$ is determined using the approach in Section \ref{subsec:first} with $\epsilon = 0.01$ and $\chi = 0.2$. We then compute the sample correlation matrix as $\bm{S}\in \mathbb{S}^{24000}$. This results in a precision matrix estimation problem involving $5.76\times 10^8$ variables to optimize, which is quite challenging for state-of-the-art methods without bridge-block decomposition.
	
	\definecolor{corn}{HTML}{F8766D}
	\definecolor{temporal_meadow}{HTML}{B79F00}
	\definecolor{hardwood}{HTML}{00BA38}
	\definecolor{pasture}{HTML}{00BFC4}
	\definecolor{repeseed}{HTML}{619CFF}
	\definecolor{barley}{HTML}{F564E3} 
	
	\begin{figure*}[!htb]
		\begin{subfigure}[t]{0.48\linewidth}
			\centering
			\begin{tikzpicture}
				\node[anchor=south west, inner sep=0] (image) at (0,0) {\includegraphics[height=5.9cm, width=6.5cm, viewport=100 80 670 640]{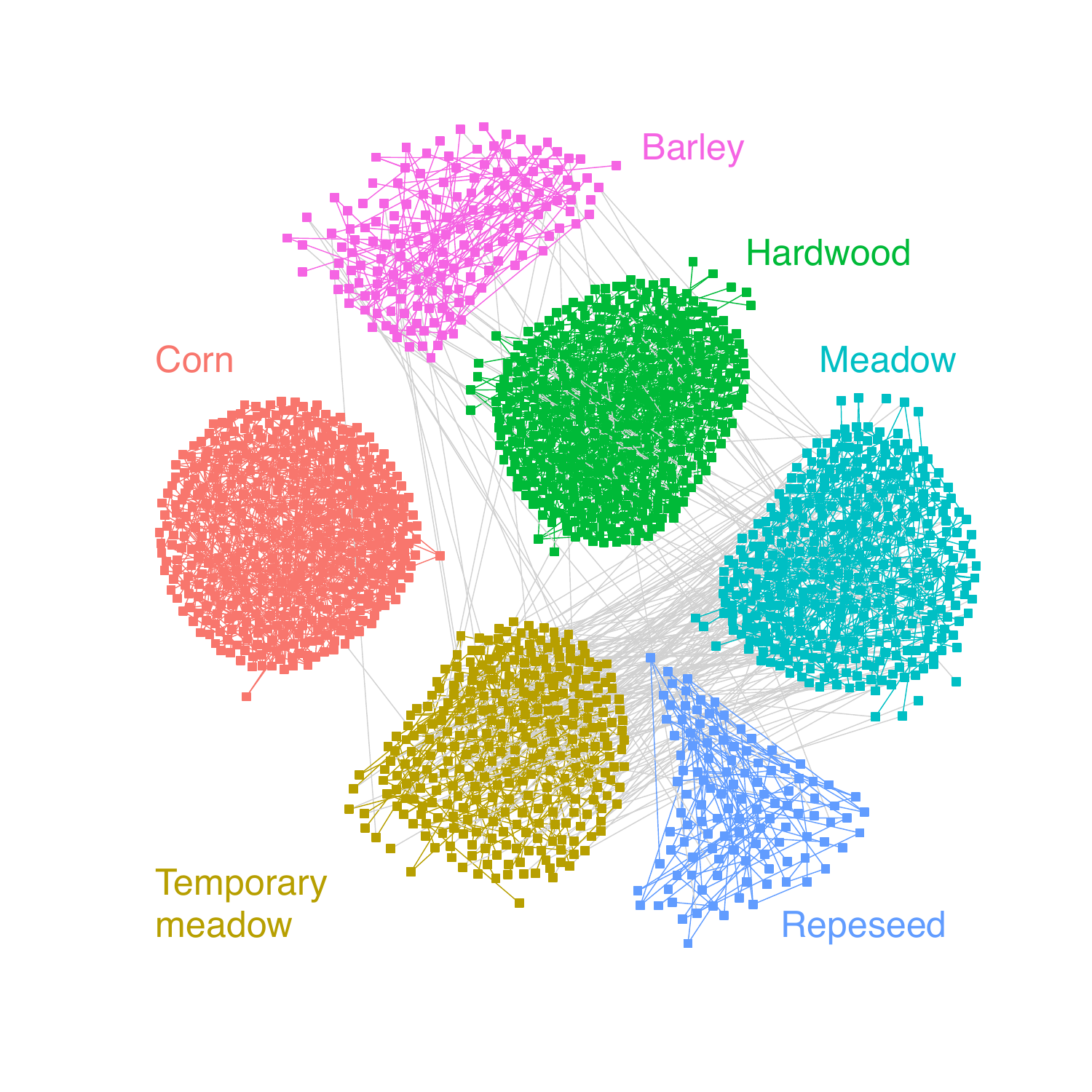}}; 
				\node[align=center, text = corn] at (0.54,4.3) {Corn}; 
				\node[align=center, text = temporal_meadow] at (0.68,0.7) {Temporal \\ Meadow};
				\node[align=center, text = hardwood] at (5.14,5.1) {Hardwood};  
				\node[align=center, text = pasture] at (5.64,4.3) {Pasture};  
				\node[align=center, text = repeseed] at (5.58,0.32) {Repeseed}; 
				\node[align=center, text = barley] at (4.14,5.68) {Barley};  
			\end{tikzpicture}
			\caption{Visualizations.}
			\label{fig:sub_opt_graph}
		\end{subfigure} \quad
		\begin{subfigure}[t]{0.48\linewidth}
			\centering
			\includegraphics[height=5.9cm]{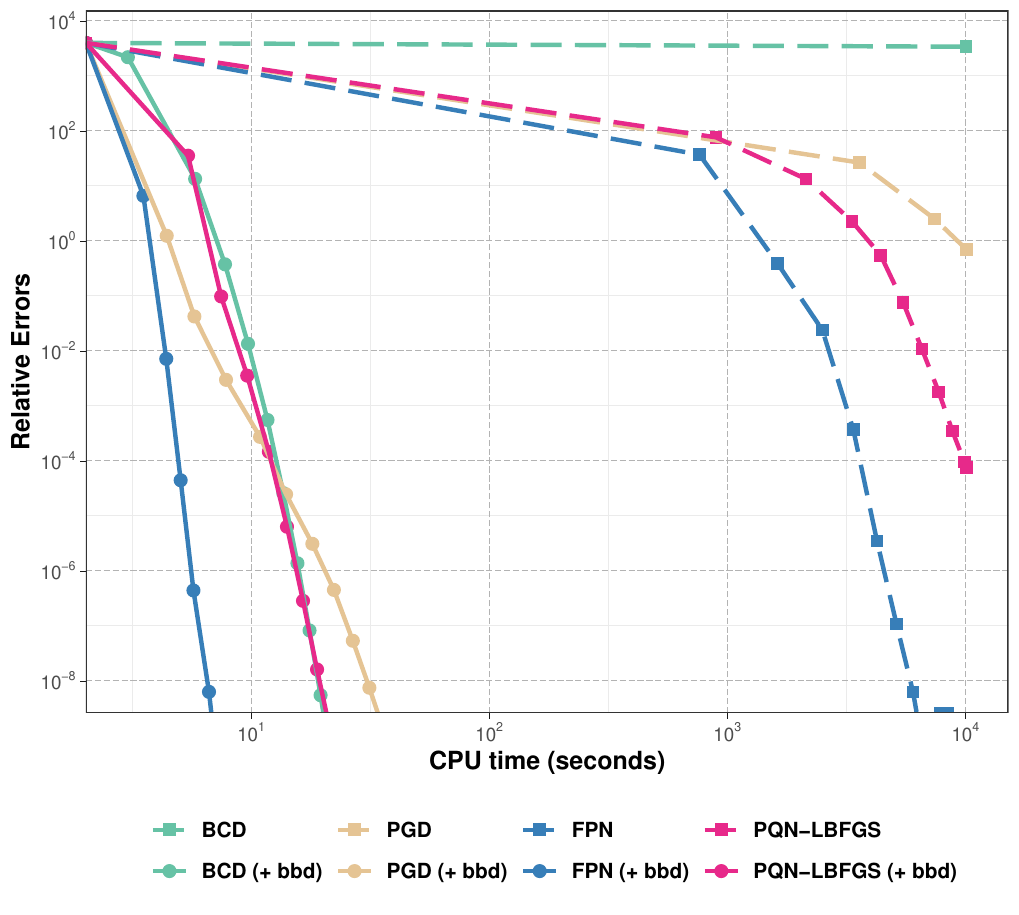}
			\caption{Experimental results on Crop data $(p=24000)$.}
			\label{fig:real_experiments}
		\end{subfigure} 
		\caption{Visualizations and experimental results of sparse $\text{MTP}_2$ GGMs for Crop data set. (a) A local structure of optimal graph with $2008$ nodes. Nodes with matching labels are assigned the same color and connected by a matching edge color, while different groups of nodes are connected by gray edges. (b) Results of convergence to learn sparse $\text{MTP}_2$ GGMs for Crop data set.}
	\end{figure*}
	
	Figure \ref{fig:real_experiments} shows the results of empirical convergence, and Figure \ref{fig:sub_opt_graph} visualizes a local structure of $\text{supp}\left( \boldsymbol{\Theta}\right)^\star$ (modularity = $0.6849$). In line with previous findings, our suggested approach greatly speeds up the convergence of all current algorithms by a minimum of three orders of magnitude. This demonstrates immense potential for managing large-scale sparse graphical models.
	
	While our paper's primary focus is not to uncover insights into the estimated $\text{MTP}_2$ GGMs for a deeper understanding of the underlying data, interesting phenomena can still be observed in the structures presented in Figure \ref{fig:sub_opt_graph}. Notably, we find that the majority of edges exist within the same crop type, while connections between nodes associated with different crops are relatively sparse. This observation is valuable for clustering processes and aligns with our expectations, as stronger positive dependencies are often observed within the same class, while dependencies between different classes tend to be weaker.
	
	Moreover, our graphical representation reveals more nuanced patterns. For instance, in Figure 6a of our manuscript, we observe a dense network of edges between two crop types, `temporary meadow' and `pasture', indicating a significantly stronger conditional dependency compared to other categories. This observation further supports our understanding. Hence, the inferred conditional dependency structure encapsulates the inherent interrelationships among different crops. As a result, our proposed framework based on bridge-block decomposition effectively facilitates learning in large-scale $\text{MTP}_2$ graphical models.
	
	\section{Conclusions and Discussions}
	\label{sec:conclusions}
	Real-world sparse Gaussian graphical models often comprise subsets of variables that are densely connected to one another, while variables in different clusters maintain weak connections. However, standard estimation algorithms do not account for this property. In this paper, by introducing the concept of bridge, we leverage these characteristics to reveal an interesting finding: the optimal solution for Problem \eqref{eq: Graphical Lasso under MTP2} equivalently admits a decomposed form via the bridge-block decomposition of the thresholded graph. We provide theoretical insights into the separability of optimal solutions and the existence of explicit expressions based on edge properties, specifically whether an edge is a bridge. This method surpasses conventional approaches that depend on unique graph structures, such as determining if a graph is acyclic. From the practical aspect, our proposed method offers a handy architect for accelerating any existing algorithms by handling a large-scale learning problem via a number of small and tractable sub-problems. Although our method is mainly developed for sparse large-scale graphs and therefore seems inapplicable for dense graph, as elaborated in the appendix \ref{sec:extension}, it can also provide a quick way to obtain a solution, which can serve as either a starting point (warm-start) for numerical algorithms or an alternative for acquiring approximate solutions. Overall, our simple and provable approach demonstrates exceptional performance and paves a novel way for more extensive designs of large-scale $\text{MTP}_2$ Gaussian graphical models.

	\section{Acknowledgements}
	
	This work was supported by the Hong Kong Research Grants Council GRF 16207820, 16310620, and 16306821, the Hong Kong Innovation and Technology Fund (ITF) MHP/009/20, and the Project of Hetao Shenzhen-Hong Kong Science and Technology Innovation Cooperation Zone under Grant HZQB-KCZYB-2020083. We would also like to thank the anonymous reviewers for their valuable feedback on the manuscript.

	\bibliographystyle{unsrt}
	\bibliography{nips_2023}

	\newpage
	\appendix
	
	\section*{Appendix}
	
	In what follows, we present technical proofs omitted in the main content.  In Section \ref{sec:path_product}, we introduce important properties of matrix $\bm{R}$ and prove Theorem \ref{lem:inverse}. In Section \ref{sec:prove_main}, we prove the main result, i.e., Theorem \ref{thm:decomposition_of_GL_MTP2}. In Section \ref{sec:extension}, we introduce some extension of the proposed framework to broaden its applicability. In Section \ref{sec:apdex-realexperiments}, we will comment on real-world experiments and explain why $\text{MTP}_2$ assumption approximately holds for Crop Image dataset.

	\section{Proof of Theorem \ref{lem:inverse}}
	\label{sec:path_product}
	
	This section contains a detailed description of how we construct the matrix $\bm{R}$ and prove Theorem \ref{lem:inverse} based on the properties of $\bm{R}$. The proof relies on the concept of a `generalized path product', which involves a set of edges labeled as a `bridge path'. The matrix $\bm{R}$ has several notable properties that are essential in demonstrating Theorem \ref{thm:decomposition_of_GL_MTP2}.
	
	\subsection{Definitions and Important Properties}
	To begin, let us define the `bridge path' $\mathcal{B}_{ij}$ for nodes $i$ and $j$ that belong to different clusters, i.e., $\psi(i) \neq \psi(j)$ based on the bridge set $\mathcal{B}$ in the thresholded graph $G$. The bridge path $\mathcal{B}_{ij}$ is defined as the set of bridges in any path $d_{ij}$ from node $i$ to node $j$, where each element in $\mathcal{B}_{ij}$ is a bridge in $\mathcal{B}$ and the elements in $\mathcal{B}_{ij}$ follow the order of $d_{ij}$ while preserving the elements in $\mathcal{B}$. Symbolically, we can express this as:
	\begin{equation}
		\mathcal{B}_{ij}=d_{ij}\cap\mathcal{B}=\left\{ \left(u_{1},u_{2}\right),\left(u_{3},u_{4}\right),\dots,\left(u_{2T-1},u_{2T}\right)\right\} ,
		\label{eq:Bij_defnition}
	\end{equation} 
	where $T$ denotes the number of clusters in any path from nodes $i$ to $j$. It is important to note the following:
	\begin{itemize} [topsep=0pt, leftmargin=*]
		\setlength\itemsep{0em} 
		\item Each element in $\mathcal{B}_{ij}$ is a bridge in the thresholded graph.
		\item The elements in $\mathcal{B}_{ij}$ follow the orders of $d_{ij}$. This means that for any path from nodes $i$ to $j$, the path will visit $(u_{2i-1},u_{2i})$, then visit $(u_{2j-1},u_{2j})$ whenever we have $i < j$. consequently, we have 
		\begin{align}\psi\left(i\right) & =\psi\left(u_{1}\right),\quad\psi\left(u_{2}\right)=\psi\left(u_{3}\right),  \dots,\psi\left(u_{2T-2}\right)=\psi\left(u_{2T-1}\right),\quad\psi\left(u_{2T}\right)=\psi\left(j\right).
		\end{align}
		\item For any path $d_{ij}$, the bridge path $\mathcal{B}_{ij} = d_{ij}\cap \mathcal{B}$ is unique.
		\item $\mathcal{B}_{ij}$ is defined as an empty set if there is no path from nodes $i$ to $j$.
	\end{itemize}
	
	It is important to mention that if the thresholded graph is acyclic, the bridge path can be reduced to a traditional path. In this scenario, all edges in the path $d_{ij}$ become bridges, and the bridge path $\mathcal{B}_{ij}$ is equivalent to the path $d_{ij}$, which can be expressed as $\mathcal{B}_{ij} = d_{ij}$.
	
	We define the matrix $\bm{R}\in \mathbb{S}^p$ based on the definition of bridge-path, as shown below:
	\begin{equation}
		R_{ij}=\begin{cases}
			[\widehat{\bm{R}}_{k}]_{\pi\left(i\right),\pi\left(j\right)} & \qquad\text{if }i,j\in\mathcal{V}_{k},\\
			T_{ij} & \qquad\text{if }(i,j)\in\mathcal{B},\\
			\sqrt{S_{ii}S_{jj}}\cdot g_{ij}\left(\bm{R}\right) & \qquad\text{otherwise},
		\end{cases}
		\label{eq:RRij} 
	\end{equation}  
	where $\widehat{\bm{R}}_{k}=[\widehat{\boldsymbol{\Theta}}_{k}]^{-1}$ and for each $i$, $j$ in different clusters, $g_{ij}$ is given as
	\begin{equation}
		\begin{array}{ll}
			g_{ij}\left(\bm{R}\right)=\prod_{t=0}^{2T}R_{u_{t},u_{t+1}}\big/\sqrt{S_{u_{t},u_{t}}S_{u_{t+1},u_{t+1}}}, & \text{where }u_{0}\overset{\Delta}{=}i,\text{ }u_{2T+1}\overset{\Delta}{=}j,
		\end{array} 
		\label{eq:Rpathproduct}
	\end{equation}
	where $u$ denotes the bridge path $\mathcal{B}_{ij}$ and $g_{ij}\left(\bm{R}\right) = 0$ if $\mathcal{B}_{ij} = \emptyset$.
	\begin{remark}
		Clearly, we have $R_{ij}=R_{ji}$ for any $i$ and $j$. 
	\end{remark}
	
	\begin{figure}[h] 
		\begin{center}  
			\begin{tikzpicture}
				\node at (0,0) {\includegraphics[height=3.2cm, width=10.5cm, viewport=200 275 700 430, clip]{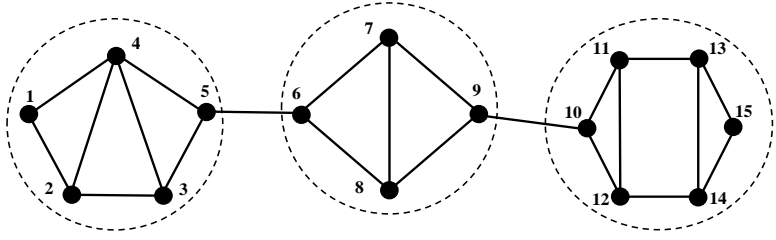}};
			\end{tikzpicture} 
			\caption{An example of computing bridge path. For example, from node $1$ to $10$, all the paths will cover the bridges $(5,6)$ and $(9,10)$. Hence, we have $\mathcal{B}_{1,10}=\left\{(5,6), (9,10)\right\}$.}
			\label{fig:bridge_path_examples}
		\end{center}  
	\end{figure}

	\textbf{Example}: For instance, using Figure \ref{fig:bridge_path_examples} as an example, we consider a path from node $1$ to $10$ as
	\begin{equation}
		d_{1,10}=\{ \left(1,4\right),\left(4,5\right),\underbrace{\left(5,6\right)}_{\text{bridge}},\left(6,7\right),\left(7,8\right),\left(8,9\right),\underbrace{\left(9,10\right)}_{\text{bridge}}\},
	\end{equation}
	the bridge path is then given as
	\begin{equation}
		\mathcal{B}_{1,10} = \left\{(5,6),(9,10)\right\},
	\end{equation}
	and $R_{1,10}$ is computed as
	\begin{align}
		& R_{1,10}=\sqrt{S_{1,1}}\cdot g_{1,10}\left(\bm{R}\right)\cdot\sqrt{S_{10,10}} \nonumber\\
		& \quad=\sqrt{S_{1,1}}\cdot\frac{R_{1,5}}{\sqrt{S_{1,1}S_{5,5}}}\cdot\frac{R_{5,6}}{\sqrt{S_{5,5}S_{6,6}}}\cdot\frac{R_{6,9}}{\sqrt{S_{6,6}S_{9,9}}}\cdot\frac{R_{9,10}}{\sqrt{S_{9,9}S_{10,10}}}\cdot\frac{R_{10,10}}{\sqrt{S_{10,10}S_{10,10}}}\cdot\sqrt{S_{10,10}}.
	\end{align} 
	Particularly, we note that 
	\begin{itemize} [topsep=0pt, leftmargin=*]
		\setlength\itemsep{0em} 
		\item The bridge path $\mathcal{B}_{1,10}$ can not be written as $\left\{(9,10),(5,6)\right\}$ as $d_{1,10}$ would always visit the bridge $(5,6)$ before visiting $(9,10)$.
		\item The bridge path $\mathcal{B}_{1,10}$ can not be written as $\left\{(6,5),(9,10)\right\}$ as $d_{1,10}$ leave first cluster at node $5$, then enter second cluster at node $6$. Therefore, we have $\psi(1) = \psi(5)$. 
		\item It is easy to observe that $\mathcal{B}_{10,1}= \left\{(10,9),(6,5)\right\}$. Based on definitions, we can obtain $R_{10,1}=R_{1,10}$.
		\item Specially, if the matrix $\bm{S}$ is given as a correlation matrix, i.e., $\text{diag}(\bm{S})=\bm{1}$, then $R_{1,10} = R_{1,5}\cdot R_{5,6}\cdot R_{6,9}\cdot R_{9,10}\cdot R_{10,10}$, which can be seen as a continued produce on a path, known as path product. 
	\end{itemize}
	
	The matrix $\bm{R}$ shares some important properties introduced as follows. 
	\begin{lemma}
		\label{lem:local_KKT}
		For any $i,j\in \mathcal{V}_k$, $\widehat{\bm{R}}_{k}$ together with $\widehat{\boldsymbol{\Theta}}_{k}$ would satisfy the following equations
		\begin{subequations}
			\label{eq:Q_basic_con}
			\begin{align}
				-[\widehat{\bm{R}}_{k}]_{\pi\left(i\right),\pi\left(j\right)}+S_{i,j}=0 & \qquad\text{if }i=j,\label{eq:subkkt1}\\
				-[\widehat{\bm{R}}_{k}]_{\pi\left(i\right),\pi\left(j\right)}+S_{i,j}-\Lambda_{i,j}=0 & \qquad\text{if }i\neq j,[\widehat{\boldsymbol{\Theta}}_{k}]_{\pi\left(i\right),\pi\left(j\right)}\neq0,\label{eq:subkkt2}\\
				-[\widehat{\bm{R}}_{k}]_{\pi\left(i\right),\pi\left(j\right)}+S_{i,j}-\Lambda_{i,j}\leq0 & \qquad\text{if }i\neq j,[\widehat{\boldsymbol{\Theta}}_{k}]_{\pi\left(i\right),\pi\left(j\right)}=0,\label{eq:subkkt3}
			\end{align}
		\end{subequations}
	\end{lemma}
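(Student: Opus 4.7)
The plan is to recognize that the sub-problem \eqref{eq:defined-sub-problem} is simply the MTP$_2$ graphical lasso \eqref{eq: Graphical Lasso under MTP2} restricted to the principal sub-matrix indexed by $\mathcal{V}_k$, and then to read off its first-order optimality conditions. Under the MTP$_2$ constraint every off-diagonal entry of a feasible $\boldsymbol{\Theta}_k$ is non-positive, so $\sum_{i\neq j}\Lambda_{ij}\,|[\boldsymbol{\Theta}_k]_{\pi(i),\pi(j)}| = -\langle \boldsymbol{\Lambda}_{\mathcal{V}_k},\boldsymbol{\Theta}_k\rangle$ (cf.\ the discussion following Assumption \ref{assu:unique_assumption}), which is precisely why the linearized objective in \eqref{eq:defined-sub-problem} coincides with the $\ell_1$-penalized form. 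This turns the sub-problem into a smooth convex program over the closed convex cone $\mathcal{M}^{p_k}$.

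I would then form the Lagrangian with non-negative dual variables $\Omega_{ij}$ (with $\Omega_{ii}=0$ by convention) for the off-diagonal inequality constraints $[\boldsymbol{\Theta}_k]_{\pi(i),\pi(j)}\leq 0$, and compute the gradient using $\nabla_{\boldsymbol{\Theta}}[-\log\det\boldsymbol{\Theta}]=-\boldsymbol{\Theta}^{-1}$. Stationarity reads $-\widehat{\bm{R}}_k + \bm{S}_{\mathcal{V}_k} - \boldsymbol{\Lambda}_{\mathcal{V}_k} - \boldsymbol{\Omega} = 0$. Extracting this entrywise yields \eqref{eq:subkkt1} on the diagonal (where $\Lambda_{ii}=\Omega_{ii}=0$); complementary slackness $\Omega_{ij}[\widehat{\boldsymbol{\Theta}}_k]_{\pi(i),\pi(j)}=0$ forces $\Omega_{ij}=0$ whenever the off-diagonal precision entry is non-zero (hence strictly negative), giving \eqref{eq:subkkt2}; and dual feasibility $\Omega_{ij}\geq 0$ gives the inequality \eqref{eq:subkkt3} whenever the off-diagonal entry vanishes.

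To justify that these conditions do in fact hold at $\widehat{\boldsymbol{\Theta}}_k$, I would note that Assumption \ref{assu:unique_assumption} is inherited verbatim by the principal sub-matrix $\bm{S}_{\mathcal{V}_k}$, so by \cite[Theorem 1]{slawski2015estimation} applied on $\mathcal{V}_k$ the minimizer exists, is unique, and lies in the interior of the positive-definite cone. Slater's condition is trivially met (any strictly diagonally dominant $M$-matrix is an interior feasible point of $\mathcal{M}^{p_k}$), so strong duality holds and the KKT system fully characterizes optimality. The entire argument is bookkeeping on first-order conditions and no step presents a substantive obstacle; the only items to track carefully are the sign convention for the dual multipliers and the collapse of the $\ell_1$ penalty into the linear term $-\langle \boldsymbol{\Lambda}_{\mathcal{V}_k},\boldsymbol{\Theta}_k\rangle$ under MTP$_2$, both of which are already exploited in the main text.
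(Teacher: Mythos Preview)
Your approach is essentially identical to the paper's: introduce nonnegative dual variables for the off-diagonal constraints $[\boldsymbol{\Theta}_k]_{\pi(i),\pi(j)}\le 0$, write down stationarity, and eliminate the multipliers via complementary slackness and dual feasibility. The only slip is the sign in your stationarity equation: with nonnegative multipliers on constraints of the form $g\le 0$, the Lagrangian adds $+\Omega_{ij}\,[\boldsymbol{\Theta}_k]_{\pi(i),\pi(j)}$, so stationarity should read $-\widehat{\bm{R}}_k+\bm{S}_{\mathcal{V}_k}-\boldsymbol{\Lambda}_{\mathcal{V}_k}+\boldsymbol{\Omega}=\bm{0}$ (as in the paper, with $\boldsymbol{\Gamma}$ in place of your $\boldsymbol{\Omega}$); with your $-\boldsymbol{\Omega}$ the inequality in \eqref{eq:subkkt3} would come out reversed. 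Once that sign is fixed, your argument and the paper's coincide.
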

	\begin{proof}
		Let's denote $\Gamma_{\pi(i),\pi(j)}$ as the dual variables associated with the constraints $[\widehat{\boldsymbol{\Theta}}_{k}]_{\pi\left(i\right),\pi\left(j\right)} \leq 0$. With $\widehat{\bm{R}}_{k} = \widehat{\boldsymbol{\Theta}}_{k}^{-1}$, the KKT conditions include 
		\begin{subequations}
			\begin{align}
				\text{stationarity:}\qquad & -\widehat{\bm{R}}_{k}+\bm{S}_{\mathcal{V}_{k}}-\boldsymbol{\Lambda}_{\mathcal{V}_{k}}+\boldsymbol{\Gamma}_{\mathcal{V}_{k}}=\bm{0}, \\
				\text{primal feasibility:\qquad} & [\widehat{\boldsymbol{\Theta}}_{k}]_{\pi\left(i\right),\pi\left(j\right)}\leq0,\text{ }\forall\pi(i)\ne\pi(j) \\
				\text{dual feasibility:\qquad} & \Gamma_{\pi(i),\pi(j)}\geq0,\text{ }\forall\pi(i)\ne\pi(j) \\
				\text{complementary slackness:\qquad} & [\widehat{\boldsymbol{\Theta}}_{k}]_{\pi\left(i\right),\pi\left(j\right)}\cdot\Gamma_{\pi(i),\pi(j)}=0,\text{ }\forall\pi(i)\ne\pi(j)
			\end{align}
		\end{subequations}
		
		We can eliminate the dual variables as follows:
		\begin{itemize}[topsep=0pt, leftmargin=*]
			\setlength\itemsep{0em} 
			\item For $[\widehat{\boldsymbol{\Theta}}_{k}]_{\pi\left(i\right),\pi\left(j\right)}<0$,  the complementary slackness leads to $\Gamma_{\pi(i),\pi(j)} = 0$, which implies $-[\widehat{\bm{R}}_{k}]_{\pi\left(i\right),\pi\left(j\right)}+S_{i,j}-\Lambda_{i,j}=0$.
			\item For  $[\widehat{\boldsymbol{\Theta}}_{k}]_{\pi\left(i\right),\pi\left(j\right)}=0$, we have $\Gamma_{\pi(i),\pi(j)} \geq 0$, which implies $-[\widehat{\bm{R}}_{k}]_{\pi\left(i\right),\pi\left(j\right)}+S_{i,j}-\Lambda_{i,j}\leq0$.
		\end{itemize}
		Hence, Lemma \ref{lem:local_KKT} is obtained. 
	\end{proof}
	
	In particular, we note that $[\widehat{\bm{R}}_k]_{\pi\left(i\right),\pi\left(i\right) }=S_{i,i}$ holds for any $i\in \mathcal{V}_k$ according to \eqref{eq:subkkt1}. As a result, the following property can be easily verified via the definitions.
	
	\begin{corollary} 
		\label{cor:Qij_decomposition}Suppose that two nodes $i$ and $j$ are in different clusters, i.e., $\psi\left(i\right)\neq\psi\left(j\right)$,
		and the bridge path is not empty and denoted by $\mathcal{B}_{ij}$ with form of (\ref{eq:Bij_defnition}), then we
		have 
		\begin{align} 
			R_{ij}=\frac{R_{i,u_{t}}\cdot R_{u_{t},j}}{S_{u_{t},u_{t}}},\qquad\forall t=0,\dots,2T+1,
		\end{align} 
	\end{corollary}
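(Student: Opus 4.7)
The plan is to expand both sides of the identity via the definitions in \eqref{eq:RRij}-\eqref{eq:Rpathproduct} and exploit the multiplicative structure of the path product. Two observations drive the argument: (i) $R_{u_t,u_t} = S_{u_t,u_t}$ for every $t$, and (ii) the bridge path $\mathcal{B}_{ij}$ admits a unique split at $u_t$ into $\mathcal{B}_{i,u_t}$ and $\mathcal{B}_{u_t,j}$.

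First I would establish (i). Each $u_t$ lies in some cluster $\mathcal{V}_k$, so the top case of \eqref{eq:RRij} gives $R_{u_t,u_t} = [\widehat{\bm R}_k]_{\pi(u_t),\pi(u_t)}$, and equation \eqref{eq:subkkt1} of Lemma \ref{lem:local_KKT} identifies this with $S_{u_t,u_t}$. The boundary cases $t=0$ and $t=2T+1$ then collapse immediately; e.g. for $t=0$ we have $R_{i,u_0} R_{u_0,j}/S_{u_0,u_0} = S_{ii} R_{ij}/S_{ii} = R_{ij}$, and $t=2T+1$ is symmetric.

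For intermediate $1 \le t \le 2T$, I would next verify (ii). Writing $\mathcal{B}_{ij} = \{(u_{2\ell-1},u_{2\ell})\}_{\ell=1}^{T}$ in order, the bridges lying strictly before $u_t$ along any $i\to j$ path form $\mathcal{B}_{i,u_t}$ and the remainder form $\mathcal{B}_{u_t,j}$; concretely, if $t=2s$ the split is after the $s$-th bridge, while if $t=2s-1$ it is after the $(s-1)$-th. Each sub-path inherits the ordering from $d_{ij}$, so it is a legitimate bridge path in the sense of \eqref{eq:Bij_defnition}.

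With (i) and (ii) in hand, substituting into \eqref{eq:Rpathproduct} shows that $R_{i,u_t}/\sqrt{S_{u_t,u_t}}$ equals the prefix $\sqrt{S_{ii}}\,\prod_{k=0}^{t-1} R_{u_k,u_{k+1}}/\sqrt{S_{u_k,u_k} S_{u_{k+1},u_{k+1}}}$ of the product defining $R_{ij}$, where any trailing factor of the form $R_{u_t,u_t}/S_{u_t,u_t} = 1$ harmlessly drops by (i); likewise $R_{u_t,j}/\sqrt{S_{u_t,u_t}}$ equals the complementary suffix. Multiplying the two and cancelling one copy of $S_{u_t,u_t}$ reassembles the full path product for $R_{ij}$. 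The only genuine obstacle is careful index bookkeeping across the parity of $t$ and the degenerate cases where one of $\mathcal{B}_{i,u_t}$, $\mathcal{B}_{u_t,j}$ is empty — notably $t=1$, where $R_{i,u_1}$ is an intra-cluster entry of $\widehat{\bm R}_{\psi(i)}$ rather than a genuine path product, but direct substitution still recovers the identity.
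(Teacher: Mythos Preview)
Your proposal is correct and follows essentially the same route as the paper: handle the boundary cases $t\in\{0,2T+1\}$ via $R_{u_t,u_t}=S_{u_t,u_t}$, then for intermediate $t$ split the defining product \eqref{eq:Rpathproduct} at position $u_t$ and recognise the two halves as $R_{i,u_t}$ and $R_{u_t,j}$. Your write-up is more explicit than the paper's (which simply displays the factored product and underbraces the two pieces), in particular in invoking Lemma~\ref{lem:local_KKT} for $R_{u_t,u_t}=S_{u_t,u_t}$ and in flagging the degenerate intra-cluster case $t=1$, but the underlying argument is identical.
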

	
	\begin{proof}
		Corollary \ref{cor:Qij_decomposition} can be easily verified via definitions. If $t=0$ or $t=2T+1$, Corollary \ref{cor:Qij_decomposition} holds as $R_{i,i}=S_{i,i}$ and $R_{j,j}=S_{j,j}$. Let $s$ be a number that satisfies $0\leq s \leq 2T$. For any $(u_s,u_{s+1})\in \mathcal{B}_{ij}$, we have
		\begin{align}
			R_{i,j}\cdot S_{u_{s},u_{s}} & =\underbrace{\sqrt{S_{i,i}}\cdot\frac{R_{u_{0},u_{1}}}{\sqrt{S_{u_{0},u_{0}}S_{u_{1},u_{1}}}}\cdot\cdots\cdot\frac{R_{u_{s-1},u_{s}}}{\sqrt{S_{u_{s-1},u_{s-1}}S_{u_{s},u_{s}}}}\cdot\sqrt{S_{u_{s},u_{s}}}}_{R_{i,u_{s}}}\nonumber\\
			& \qquad\cdot\underbrace{\sqrt{S_{u_{s},u_{s}}}\cdot\frac{R_{u_{s},u_{s+1}}}{\sqrt{S_{u_{s},u_{s}}S_{u_{s+1},u_{s+1}}}}\cdot\cdot\cdots\cdot\frac{R_{u_{2T-1},u_{2T}}}{\sqrt{S_{u_{2T-1},u_{2T-1}}S_{u_{2T},u_{2T}}}}\cdot\sqrt{S_{j,j}}}_{R_{u_{s},j}}.
		\end{align}
		Clearly, Corollary \ref{cor:Qij_decomposition} is obtained.
	\end{proof}
	
	The aforementioned properties of matrix $\bm{R}$ are important for establishing the proof of Theorem \ref{lem:inverse} and Theorem \ref{thm:decomposition_of_GL_MTP2}.  
	
	\subsection{Proof of Theorem \ref{lem:inverse}}
	
	\begin{proof}
		Let $\bm{F}=\boldsymbol{\Theta}\bm{R}$, where $\boldsymbol{\Theta}$ and $\bm{R}$ have the forms of \eqref{eq:Theta_form} and \eqref{eq:Rij}, respectively. The target is to show $F_{ii}=1,\forall i$ and $F_{ij}=0,\text{  } \forall i\neq j$ using the following results we have derived. \medskip{} \\
		\textbf{(a)} For any $i,j\in\mathcal{V}_{k}$, we have 
		\begin{equation}
			\boldsymbol{\Theta}_{ij}=\begin{cases}
				[\widehat{\boldsymbol{\Theta}}_{k}]_{\pi\left(i\right),\pi\left(j\right)}+\frac{1}{S_{ii}}\cdot\sum_{\left(i,m\right)\in\mathcal{B}}\frac{T_{im}^{2}}{S_{ii}S_{mm}-T_{im}^{2}} & \text{if }i=j,\\{}
				[\widehat{\boldsymbol{\Theta}}_{k}]_{\pi\left(i\right),\pi\left(j\right)} & \text{if }i\neq j,
			\end{cases}
		\end{equation}
		and $R_{ij}=[\widehat{\bm{R}}_{k}]_{\pi\left(i\right),\pi\left(j\right)}$ according to the definitions.\medskip{} \\
		\textbf{(b)} For any nodes $i$ and $j$ in different clusters, suppose that $\mathcal{B}_{ij}$ is not empty and edge $\left(u,v \right) $ is a bridge in $\mathcal{B}_{ij}$, then $R_{ij}S_{uu}=R_{iu}R_{uj}$ and $R_{ij}S_{vv}=R_{iv}R_{vj}$ hold according to Corollary \ref{cor:Qij_decomposition}.\medskip{} \\
		\textbf{(c)} The matrix $ \widehat{\bm{F}}_{k}$, which is defined as $ \widehat{\bm{F}}_{k}=\widehat{\bm{R}}_{k}\widehat{\boldsymbol{\Theta}}_{k}$ is equal to the identity matrix, i.e, $\widehat{\bm{F}}_{k}=\bm{I}$. In other words, we have 
		\begin{equation}
			[\widehat{\bm{F}}_{k}]_{\pi\left(i\right),\pi\left(i\right)}=1,\text{ and }[\widehat{\bm{F}}_{k}]_{\pi\left(i\right),\pi\left(j\right)}=0,\quad\forall i\neq j\text{ and }i,j\in\mathcal{V}_{k}.
		\end{equation}
		Based on (a), (b), and (c), we first show that $F_{ii}=1$ holds for any $i\in\left\{ 1,\dots,p\right\} $. We suppose that node $i$ is in $k$-the cluster, i.e., $k=\psi\left(i\right)$.
		Then, by using $\forall m\notin\left(\mathcal{V}_{k}\cup\mathcal{N}\left(i\right)\right):\Theta_{im}=0$ from definitions in \eqref{eq:Theta_form},
		we have  
		\begin{align}
			F_{ii} & =\sum_{m=1}^{p}R_{im}\Theta_{im}=\sum_{m\in\mathcal{V}_{k}\backslash\left\{ i\right\} }R_{im}\Theta_{im}+R_{ii}\Theta_{ii}+\sum_{\left(i,m\right)\in\mathcal{B}}R_{im}\Theta_{im}\nonumber\\
			& \overset{\left(a\right)}{=}\sum_{m\in\mathcal{V}_{k}\backslash\left\{ i\right\} }[\widehat{\bm{R}}_{k}]_{\pi\left(i\right),\pi\left(m\right)}[\widehat{\boldsymbol{\Theta}}_{k}]_{\pi\left(i\right),\pi\left(m\right)}+[\widehat{\bm{R}}_{k}]_{\pi\left(i\right),\pi\left(i\right)}[\widehat{\boldsymbol{\Theta}}_{k}]_{\pi\left(i\right),\pi\left(i\right)}\nonumber\\
			& \qquad+S_{ii}\cdot\frac{1}{S_{ii}}\sum_{\left(i,m\right)\in\mathcal{B}}\frac{T_{im}^{2}}{S_{ii}S_{mm}-T_{im}^{2}}+\sum_{\left(i,m\right)\in\mathcal{B}}T_{im}\left(-\frac{T_{im}}{S_{ii}S_{mm}-T_{im}^{2}}\right)\nonumber\\
			& =\sum_{m\in\mathcal{V}_{k}\backslash\left\{ i\right\} }[\widehat{\bm{R}}_{k}]_{\pi\left(i\right),\pi\left(m\right)}[\widehat{\boldsymbol{\Theta}}_{k}]_{\pi\left(i\right),\pi\left(m\right)}+[\widehat{\bm{R}}_{k}]_{\pi\left(i\right),\pi\left(i\right)}[\widehat{\boldsymbol{\Theta}}_{k}]_{\pi\left(i\right),\pi\left(i\right)}\overset{\left(c\right)}{=}[\widehat{\bm{F}}_{k}]_{\pi_{k}\left(i\right),\pi_{k}\left(i\right)}=1.
		\end{align}
		Therefore, $T_{ii}=1$ holds for any $i\in\left\{ 1,\dots,p\right\} $. 
		
		Let us proceed with the proof that $F_{ij}=\sum_{m=1}^{p}R_{im}\Theta_{mj}=0$ for any $i\neq j$. Without loss of generality, we suppose that node $j$ is in the $k$-th cluster, i.e., $k=\psi\left(j\right)$. According to the definitions, we have
		\begin{equation}
			\forall m\notin\left(\mathcal{V}_{k}\cup\mathcal{N}\left(j\right)\right):\Theta_{mj}=0.
		\end{equation}
		Here, there would be three cases to analyze.
		
		The first case happens where $i\in\mathcal{V}_{k}$, which means that nodes $i$ and $j$ are in the same cluster. In this case, we have 
		\begin{align}
			\label{eq:F_ij=0}
			F_{ij}&=\sum_{m=1}^{p}R_{im}\Theta_{mj}=\sum_{m\in\mathcal{V}_{k}\backslash\left\{ j\right\} }R_{im}\Theta_{mj}+R_{ij}\Theta_{jj}+\sum_{\left(m,j\right)\in\mathcal{B}}R_{im}\Theta_{mj}\nonumber\\&\overset{\left(a\right)}{=}\sum_{m\in\mathcal{V}_{k}\setminus\{j\}}[\widehat{\bm{R}}_{k}]_{\pi\left(i\right),\pi\left(m\right)}[\widehat{\boldsymbol{\Theta}}_{k}]_{\pi\left(m\right),\pi\left(j\right)}\nonumber\\&\qquad+[\widehat{\bm{R}}_{k}]_{\pi\left(i\right),\pi\left(j\right)}\left([\widehat{\boldsymbol{\Theta}}_{k}]_{\pi\left(j\right),\pi\left(j\right)}+\frac{1}{S_{jj}}\sum_{\left(m,j\right)\in\mathcal{B}}\frac{T_{mj}^{2}}{S_{mm}S_{jj}-T_{mj}^{2}}\right)+\sum_{\left(m,j\right)\in\mathcal{B}}R_{im}\Theta_{mj}\nonumber\\&\overset{\left(c\right)}{=}[\widehat{\bm{F}}_{k}]_{\pi_{k}\left(i\right),\pi_{k}\left(j\right)}+R_{ij}\left(\frac{1}{S_{jj}}\sum_{\left(m,j\right)\in\mathcal{B}}\frac{T_{mj}^{2}}{S_{mm}S_{jj}-T_{mj}^{2}}\right)+\sum_{\left(m,j\right)\in\mathcal{B}}R_{im}\Theta_{mj}\nonumber\\&\overset{\left(b,c\right)}{=}0+R_{ij}\left(\frac{1}{S_{jj}}\sum_{\left(m,j\right)\in\mathcal{B}}\frac{T_{mj}^{2}}{S_{mm}S_{jj}-T_{mj}^{2}}\right)+\sum_{\left(m,j\right)\in\mathcal{B}}\frac{R_{ij}R_{jm}}{S_{jj}}\left(-\frac{T_{mj}}{S_{mm}S_{jj}-T_{mj}^{2}}\right)\nonumber\\
			&=0.
		\end{align}
		
		The second case happens where $i\notin\mathcal{V}_{k}$, but there is no path between nodes $i$ and $j$. In such a case, for any node $m$ in $\mathcal{V}_{k}\cup\mathcal{N}\left(j\right)$, there is also no path from nodes $i$ to $m$. Consequently, based on the definition of $\bm{R}$, we have 
		\begin{equation}
			\forall m\in \mathcal{V}_{k}\cup\mathcal{N}\left(j\right) :\quad R_{im}=0  .
		\end{equation}
		
		As a result, we obtain 
		\begin{equation}
			F_{ij}=\sum_{m=1}^{p}R_{im}\Theta_{mj}=\sum_{m\in\mathcal{V}_{k}\cup\mathcal{N}\left(j\right)}R_{im}\Theta_{mj}=0.
		\end{equation}
		The third case happens where $i\notin\mathcal{V}_{k}$ and there exists a path from nodes $i$ to $j$. Then we have  $\psi\left(i\right)\neq\psi\left(j\right)$ and the bridge path $\mathcal{B}_{ij}$ is denoted as 
		\begin{equation}
			\mathcal{B}_{ij}=\left\{ \left(u_{1},u_{2}\right),\left(u_{3},u_{4}\right),\dots,\left(u_{2T-1},u_{2T}\right)\right\} ,
		\end{equation} 
		where $\psi\left(u_1\right)=\psi\left(i\right)$
		and $\psi\left(u_{2T}\right)=\psi\left(j\right)=k$.
		In particular, we emphasize that $u_{2T}\in\mathcal{V}_{k}$ as $\left(u_{2T-1},u_{2T}\right)$
		is a bridge to enter the cluster that contains $j$. Therefore, we have
		\begin{equation}
			F_{ij}=\sum_{m=1}^{p}R_{im}\Theta_{mj}\overset{\left(b\right)}{=}\frac{R_{i,u_{2T}}}{S_{u_{2T},u_{2T}}}\cdot\sum_{m=1}^{p}R_{u_{2T},m}\Theta_{mj}=0,
		\end{equation}
		where the last equality is from \eqref{eq:F_ij=0} by using $u_{2T}\in\mathcal{V}_{k}$.  In conclusion, via enumerating all the possible cases, we have $\bm{F}=\bm{I}$, which means that $\bm{R}\boldsymbol{\Theta}=\bm{I}$. 
	\end{proof} 
	
	\begin{remark}
		Theorem \ref{lem:inverse} generally holds for any $\bm{T}$, $\bm{S}$ and $\{ \widehat{\boldsymbol{\Theta}}_{k}\in\mathbb{S}_{++}^{\left|\mathcal{V}_{k}\right|}\} _{k=1}^{K}$. It does not utilize $\text{MTP}_2$ properties from $\boldsymbol{\Theta}$. Hence, Theorem \ref{lem:inverse} itself may be extended to graphical models beyond $\text{MTP}_2$ assumption.
	\end{remark}

	\section{Proof of Main Results}
	\label{sec:prove_main}
	To prove Theorem \ref{thm:decomposition_of_GL_MTP2}, we  require some preliminaries on $\text{MTP}_2$ properties.
	
	\begin{lemma}
		\label{lem:Z-to-M matrix}
		\cite[Theorem 1]{plemmons1977m} A real symmetric matrix $\bm{X}\in \mathbb{S}^p$, which admits the form:
		\begin{equation}
			\forall i\neq j:\quad X_{ij}\leq 0,
		\end{equation}
		is a non-singular $M$-matrix if and only if one of the following statements holds: (1) Every eigenvalue of $\bm{X}$ is positive; (2) $\bm{X}$ is inverse-positive. That is, $\bm{X}^{-1}$ exists and $\bm{X}^{-1}\geq\bm{0}$. 
	\end{lemma}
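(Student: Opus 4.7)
This is a classical characterization of non-singular $M$-matrices in the symmetric case (one of the many equivalences catalogued in \cite{plemmons1977m}), so the plan is to reduce everything to Perron-Frobenius theory via a shift decomposition. The first move is to pick any scalar $s > \max_i X_{ii}$ and write $\bm{X} = s\bm{I} - \bm{B}$ with $\bm{B} = s\bm{I} - \bm{X}$. The hypothesis $X_{ij} \le 0$ for $i\neq j$ together with the choice of $s$ shows that $\bm{B}$ is a non-negative symmetric matrix, so the Perron-Frobenius theorem supplies a non-negative eigenvector $\bm{v} \ne \bm{0}$ with $\bm{B}\bm{v} = \rho(\bm{B})\bm{v}$, where $\rho(\bm{B})$ denotes the spectral radius.

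The first substantive step is to establish the equivalence between being a non-singular $M$-matrix and statement (1). Since the eigenvalues of $\bm{X}$ are exactly $\{s - \lambda_i(\bm{B})\}$, having every eigenvalue of $\bm{X}$ positive is equivalent to $s > \rho(\bm{B})$, and this is the standard definition of a non-singular $M$-matrix (and it is independent of the particular choice of shift $s$). This yields ($M$-matrix) $\Leftrightarrow$ (1) directly.

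Next I would prove $(1) \Rightarrow (2)$ via the Neumann series
\begin{equation*}
\bm{X}^{-1} = (s\bm{I} - \bm{B})^{-1} = s^{-1} \sum_{k=0}^{\infty}(\bm{B}/s)^{k},
\end{equation*}
which converges under (1) because $\rho(\bm{B}/s) < 1$; every term is entrywise non-negative, hence $\bm{X}^{-1} \ge \bm{0}$. For the converse $(2) \Rightarrow (1)$, I would plug the Perron eigenvector $\bm{v}$ into $\bm{X}\bm{v} = (s - \rho(\bm{B}))\bm{v}$ and apply $\bm{X}^{-1}$ to obtain $\bm{v} = (s - \rho(\bm{B}))\,\bm{X}^{-1}\bm{v}$. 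Since $\bm{X}^{-1}\bm{v} \ge \bm{0}$ and $\bm{v}$ is a non-zero non-negative vector, the scalar $s - \rho(\bm{B})$ must be strictly positive, giving (1).

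The main obstacle is the converse $(2) \Rightarrow (1)$: one must simultaneously exclude $s = \rho(\bm{B})$ (which would contradict the invertibility of $\bm{X}$ assumed in (2)) and $s < \rho(\bm{B})$ (which would force $\bm{v}$ to be simultaneously non-negative and non-positive, hence zero, contradicting that it is an eigenvector). Both are ruled out only because a non-negative Perron eigenvector is available, which is why Perron-Frobenius is essential; once that vector is in hand, everything else reduces to straightforward spectral bookkeeping and the convergent series manipulation above.
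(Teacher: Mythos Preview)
Your argument is correct and is the standard Perron--Frobenius route to this characterization. Note, however, that the paper does not supply its own proof of this lemma at all: it is stated purely as a citation of \cite[Theorem~1]{plemmons1977m} and then immediately invoked (via Corollary~\ref{cor:Q_pos}) to conclude $\bm{R}\ge\bm{0}$. So there is no ``paper's proof'' to compare against; you have written out what the paper leaves to the reference.

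One small remark on alignment with the paper's conventions: the paper \emph{defines} $\mathcal{M}^p$ as the set of symmetric positive-definite matrices with non-positive off-diagonals, so in this context the equivalence ``non-singular $M$-matrix $\Leftrightarrow$ (1)'' is immediate (positive definite $\Leftrightarrow$ all eigenvalues positive, for symmetric $\bm{X}$). Your shift decomposition $\bm{X}=s\bm{I}-\bm{B}$ is therefore not strictly needed for that part, though it is exactly the right tool for the substantive direction $(1)\Leftrightarrow(2)$, where your Neumann-series argument for $(1)\Rightarrow(2)$ and Perron-eigenvector argument for $(2)\Rightarrow(1)$ are both clean and complete.
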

	
	A direct outcome of Lemma \ref{lem:Z-to-M matrix} is that the matrix $\bm{R}$ is a non-negative matrix.
	\begin{corollary}
		\label{cor:Q_pos}
		The matrix $\bm{R}$ given by \eqref{eq:Rij} satisfies	$\bm{R}\geq \bm{0}$.
	\end{corollary}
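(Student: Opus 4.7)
The plan is to verify $R_{ij}\ge 0$ by matching each entry against the three cases in the piecewise definition \eqref{eq:Rij}. First, I would handle the within-cluster case $i,j\in\mathcal V_k$ using Lemma \ref{lem:Z-to-M matrix}: since $\widehat{\boldsymbol{\Theta}}_k$ is the minimizer of the sub-problem \eqref{eq:defined-sub-problem} and hence lies in $\mathcal M^{p_k}$, it is a non-singular symmetric $M$-matrix, so its inverse $\widehat{\bm R}_k$ is entrywise non-negative; this immediately settles $R_{ij}=[\widehat{\bm R}_k]_{\pi(i),\pi(j)}\ge 0$.

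Next, for the bridge case $(i,j)\in\mathcal B$, the edge belongs to $\text{supp}(\bm T)$, and the definition of the thresholded covariance matrix forces $S_{ij}>\Lambda_{ij}$ and hence $T_{ij}>0$; since $R_{ij}=T_{ij}$, this case is immediate.

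For the remaining off-cluster, non-bridge entries I would invoke the explicit formula \eqref{eq:Rpathproduct}. If the bridge path $\mathcal B_{ij}$ is empty then $g_{ij}(\bm R)=0$ by convention. Otherwise, the defining sequence $u_0,u_1,\dots,u_{2T+1}$ is structured so that the pairs $(u_{2s-1},u_{2s})$ are bridges while the pairs $(u_{2s},u_{2s+1})$ share a cluster (i.e.\ $\psi(u_{2s})=\psi(u_{2s+1})$, as noted right after \eqref{eq:Bij_defnition}). Consequently every factor $R_{u_t,u_{t+1}}$ appearing in the path product is already covered by one of the two earlier cases and is therefore non-negative, while the prefactor $\sqrt{S_{ii}S_{jj}}$ is strictly positive, yielding $R_{ij}\ge 0$.

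The main (and essentially only) conceptual point is the structural observation that although the third case of \eqref{eq:Rij} is defined in terms of other entries of $\bm R$, the entries that actually appear in the path product strictly belong to the first two cases, so the verification never loops back into itself. Once this is articulated, the proof reduces to a single application of Lemma \ref{lem:Z-to-M matrix} together with the sign property $T_{ij}>0$ for edges of $\text{supp}(\bm T)$.
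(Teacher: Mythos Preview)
Your proposal is correct and follows the same approach as the paper. The paper's own justification is a one-line appeal to the implication $\widehat{\boldsymbol{\Theta}}_{k}\in\mathcal{M}^{p_{k}}\Rightarrow\widehat{\bm R}_{k}\ge\bm 0$ (i.e., exactly your use of Lemma~\ref{lem:Z-to-M matrix}), leaving the bridge and path-product cases implicit; you have simply made those two cases explicit, and your structural observation that every factor $R_{u_t,u_{t+1}}$ in \eqref{eq:Rpathproduct} falls into one of the first two cases is precisely the point the paper takes for granted.
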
 
	
	Corollary \ref{cor:Q_pos} holds according to the fact that 
	\begin{equation}
		\forall k\in\left\{ 1,\dots,K\right\} :\quad\widehat{\boldsymbol{\Theta}}_{k}\in\mathcal{M}^{p_{k}}\quad\Rightarrow\quad\widehat{\bm{R}_{k}}\ge\bm{0}.
	\end{equation}
	
	Now, we begin proving Theorem \ref{thm:decomposition_of_GL_MTP2}.
	
	\textbf{Proof of Theorem \ref{thm:decomposition_of_GL_MTP2}:} First of all, we show that the matrix $\boldsymbol{\Theta}$, given as 
	\begin{equation}
		\boldsymbol{\Theta}_{i,j}=\begin{cases}
			[\widehat{\boldsymbol{\Theta}}_{k}]_{\pi\left(i\right),\pi\left(i\right)}+\zeta_{i} & \qquad\text{if }i=j\in\mathcal{V}_{k},\\{}
			[\widehat{\boldsymbol{\Theta}}_{k}]_{\pi\left(i\right),\pi\left(j\right)} & \qquad\text{if }i\neq j\text{\text{ and }}i,j\in\mathcal{V}_{k},\\
			-T_{ij}\big/(S_{ii}S_{jj}-T_{ij}^{2}) & \qquad\text{if}\left(i,j\right)\in\mathcal{B},\\
			0 & \qquad\text{otherwise}.
		\end{cases}
	\end{equation} 
	is primal feasible and positive definite. Based on Assumption \ref{assu:unique_assumption}, which states that $S_{ij}<\sqrt{S_{ii}S_{jj}}$, we know that $T_{ij}^{2}\leq(S_{ij}-\Lambda_{ij})^2\leq S_{ij}^{2}<S_{ii}S_{jj}$. Hence, for any $i\neq j$, we have $\Theta_{ij}\leq 0$ due to $-T_{ij}\big/(S_{ii}S_{jj}-T_{ij}^{2}) \leq 0$ and the primal feasibility of each $\widehat{\boldsymbol{\Theta}}_{k}$. Additionally, since we have $\bm{R}\geq \bm{0}$ according to Corollary \ref{cor:Q_pos}, we can conclude that $\boldsymbol{\Theta}$ is inverse-positive according to Theorem \ref{lem:inverse}, which implies that it is a positive-definite matrix. In other words, $\boldsymbol{\Theta}$ is both primal feasible and inverse-positive, and we can express this as $\boldsymbol{\Theta} \succ \bm {0}$ based on Lemma \ref{lem:Z-to-M matrix}.
	
	In second step, we show that $\boldsymbol{\Theta}=\bm{R}^{-1}$ satisfies the optimality condition listed as follows, i.e.,
	\begin{subequations}
		\begin{align}
			-\left[\boldsymbol{\Theta}^{-1}\right]_{ij}+S_{ij}=0 & \qquad\qquad\text{if }i=j,\label{eq:1kkt}\\
			-\left[\boldsymbol{\Theta}^{-1}\right]_{ij}+S_{ij}-\Lambda_{ij}=0 & \qquad\qquad\text{if }i\neq j,\Theta_{ij}\neq0,\label{eq:2kkt}\\
			-\left[\boldsymbol{\Theta}^{-1}\right]_{ij}+S_{ij}-\Lambda_{ij}\leq0 & \qquad\qquad\text{if }i=j,\Theta_{ij}=0.\label{eq:3kkt}
		\end{align}
	\end{subequations}
	For \eqref{eq:1kkt}, we let $k=\psi\left(i \right) $. Then, according to \eqref{eq:subkkt1} in Lemma \ref{lem:local_KKT}, we have
	\begin{equation}
		-[\boldsymbol{\Theta}^{-1}]_{ij}+S_{ij}=-[\widehat{\bm{R}}_{k}]_{\pi\left(i\right),\pi\left(i\right)}+S_{ij}=0.
	\end{equation}
	For \eqref{eq:2kkt}, as $\Theta_{ij}\neq0$, the pair $\left(i,j\right) $ either satisfies $\left(i,j\right)\in\mathcal{B}$ or $\psi\left( i\right) =\psi\left( j\right)\overset{\Delta}{=} k$. If $\left(i,j\right)\in\mathcal{B}$, then by using $R_{ij}=T_{ij}$, we have
	\begin{equation}
		-[\boldsymbol{\Theta}^{-1}]_{ij}+S_{ij}-\Lambda_{ij}=-R_{ij}+T_{ij}=0.
	\end{equation}
	If nodes $i$ and $j$ are in the same cluster, then according to \eqref{eq:subkkt2} in Lemma \ref{lem:local_KKT}, we have $\Theta_{ij}=[\widehat{\boldsymbol{\Theta}}_{k}]_{\pi\left(i\right),\pi\left(j\right)}\neq0$ such that
	\begin{equation}
		\label{eq:second_KKT}
		-[\boldsymbol{\Theta}^{-1}]_{ij}+S_{ij}-\Lambda_{ij}=-[\widehat{\bm{R}}_{k}]_{\pi\left(i\right),\pi\left(j\right)}+S_{ij}-\Lambda_{ij}=0.
	\end{equation}
	For \eqref{eq:3kkt}, the pair $\left(i,j\right) $ either satisfies $\psi\left( i\right) \neq \psi\left( j\right)$ or $\psi\left(i \right) = \psi\left( j\right)$. For first case, we have $S_{ij}-\Lambda_{ij}\leq0$ as $\left(i,j \right) \notin \text{supp}\left( \bm{T}\right) $. Consequently, we obtain
	\begin{equation}
		-[\boldsymbol{\Theta}^{-1}]_{ij}+S_{ij}-\Lambda_{ij}\leq-R_{ij}\leq0,
	\end{equation} 
	as $\bm{R}\geq \bm{0}$. For second case where $\psi\left( i\right) =\psi\left( j\right) $, we have $\Theta_{ij}=[\widehat{\boldsymbol{\Theta}}_{k}]_{\pi\left(i\right),\pi\left(j\right)}=0$. Then, applying \eqref{eq:subkkt3} in Lemma \ref{lem:local_KKT}, we have 
	\begin{equation}
		-[\boldsymbol{\Theta}^{-1}]_{ij}+S_{ij}-\Lambda_{ij}=-[\widehat{\bm{R}}_{k}]_{\pi\left(i\right),\pi\left(j\right)}+S_{ij}-\Lambda_{ij}\leq0.
	\end{equation}
	
	Hence, together with the fact that $\boldsymbol{\Theta}$ is primal feasible and positive definite, we conclude that $\boldsymbol{\Theta}$ satisfies the KKT conditions of convex Problem \eqref{eq: Graphical Lasso under MTP2}. Therefore, $\boldsymbol{\Theta}$ is the optimal solution of Problem \eqref{eq: Graphical Lasso under MTP2} and Theorem \ref{thm:decomposition_of_GL_MTP2} is obtained.  
	
	\section{Extensions and Discussions}
	\label{sec:extension}
	
	\subsection{Extensions to traditional graphical lasso}
	
	The traditional graphical lasso is formulated as 
	
	\begin{equation}
		\begin{array}{ll}
			\underset{\boldsymbol{\Theta}\succ\bm{0}}{\mathsf{minimize}} & -\log\det\left(\boldsymbol{\Theta}\right)+\left\langle \boldsymbol{\Theta},\bm{S}\right\rangle +\sum_{i\neq j}\Lambda_{ij}\left|\Theta_{ij}\right|.
		\end{array} 
	\end{equation}
	
	The KKT conditions of graphical lasso include: 
	\begin{subequations}
		\begin{align}
			-R_{ij}+S_{ij}=0 & \qquad\qquad\text{if }i=j,\label{eq:1kkt_glasso}\\
			-R_{ij}+S_{ij}-\Lambda_{ij}\cdot\text{sign}\left(\Theta_{ij}\right)=0 & \qquad\qquad\text{if }i\neq j,\Theta_{ij}\neq0,\label{eq:2kkt_glasso}\\
			-\Lambda_{ij}\leq -R_{ij}+S_{ij}- \leq \Lambda_{ij} & \qquad\qquad\text{if }i=j,\Theta_{ij}=0.\label{eq:3kkt_glasso}
		\end{align}
	\end{subequations}
	
	In the realm of graphical lasso, the thresholded sample covariance matrix is usually defined as
	\begin{equation}
		T_{ij}=\begin{cases}
			0 & \text{if }|S_{ij}|\leq\Lambda_{ij},\\
			S_{ij}-\Lambda_{ij}\cdot\text{sign}\left(S_{ij}\right) & \text{if }|S_{ij}|>\Lambda_{ij}.
		\end{cases}
	\end{equation}
	
	Our main results Theorem \ref{thm:decomposition_of_GL_MTP2} can be generalized to graphical lasso with some modifications.
	
	\begin{theorem}
		\label{thm:lasso_generalized}
		Suppose that $\mathcal P$ is the bridge-block decomposition of the thresholded graph, $\widehat{\boldsymbol{\Theta}}_{k}$ is the optimal solution of $k$-th sub-problem, i.e.,
		\begin{equation}
			\widehat{\boldsymbol{\Theta}}_{k}=\arg\min_{\boldsymbol{\Theta}_{k}\succ \bm{0}}-\log\det\left(\boldsymbol{\Theta}_{k}\right)+\left\langle \boldsymbol{\Theta}_{k},\bm{S}_{\mathcal{V}_{k}}\right\rangle + \sum_{i\neq j} \left[\boldsymbol{\Lambda}_{\mathcal V_k}\odot \left|\boldsymbol{\Theta}_{k}\right|\right]_{ij}.
			\label{eq:defined-sub-problem}
		\end{equation} 
		The decomposed form, which is given as 
		\begin{equation} 
			\label{eq:lasso_decomposed_form}
			\boldsymbol{\Theta}_{i,j}^{\star}=\begin{cases}
				[\widehat{\boldsymbol{\Theta}}_{k}]_{\pi\left(i\right),\pi\left(i\right)}+\zeta_{i} & \qquad\text{if }i=j\in\mathcal{V}_{k},\\
				[\widehat{\boldsymbol{\Theta}}_{k}]_{\pi\left(i\right),\pi\left(j\right)} & \qquad\text{if }i\neq j\text{\text{ and }}i,j\in\mathcal{V}_{k},\\
				-T_{ij}\big/(S_{ii}S_{jj}-T_{ij}^{2}) & \qquad\text{if}\left(i,j\right)\in\mathcal{B},\\
				0 & \qquad\text{otherwise},
			\end{cases}
		\end{equation}
		in which $\zeta_{i}=\frac{1}{S_{ii}}\sum_{\left(i,m\right)\in\mathcal{B}}\frac{T_{im}^{2}}{S_{ii}S_{mm}-T_{im}^{2}}$ and $\zeta_i=0$ if $\forall m:\left(i,m\right)\notin\mathcal{B}$, achieves optimality if $\boldsymbol{\Theta}\succ \bm{0}$ and the following inequality is satisfied:
		\begin{equation}
			\label{eq:lasso_condition}
			|-R_{ij}+S_{ij}|\leq \Lambda_{ij},\quad \forall i\neq j \text{ and }i,j \text{ belong to different clusters}.
		\end{equation}
	\end{theorem}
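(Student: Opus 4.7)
I would mirror the proof of Theorem \ref{thm:decomposition_of_GL_MTP2}, replacing the $\mathrm{MTP}_2$-specific ingredients with the two explicit hypotheses of Theorem \ref{thm:lasso_generalized}. The first step is to invoke Theorem \ref{lem:inverse} on the ansatz in \eqref{eq:lasso_decomposed_form}. Since that theorem was established for arbitrary $\bm{S}$, $\bm{T}$, and $\{\widehat{\boldsymbol{\Theta}}_k\}$ without ever using the $M$-matrix structure, it transfers verbatim and delivers the familiar closed forms for $\bm{R}=\boldsymbol{\Theta}^{-1}$: namely $R_{ii}=S_{ii}$, $R_{ij}=[\widehat{\bm{R}}_k]_{\pi(i),\pi(j)}$ within each cluster, $R_{ij}=T_{ij}$ on bridges, and the bridge-path product of Corollary \ref{cor:Qij_decomposition} otherwise.

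\textbf{KKT verification.} The core of the proof is a case-by-case check of the graphical-lasso KKT system \eqref{eq:1kkt_glasso}--\eqref{eq:3kkt_glasso}. (i) Diagonal entries follow immediately from $R_{ii}=S_{ii}$. (ii) For within-cluster off-diagonals, I would invoke the sub-problem KKT for \eqref{eq:defined-sub-problem}, which is itself the graphical-lasso KKT restricted to cluster $k$ and is the direct analog of Lemma \ref{lem:local_KKT}; this simultaneously covers the cases $\Theta_{ij}\neq 0$ and $\Theta_{ij}=0$ inside a cluster. (iii) Bridges are handled by direct substitution: combining $R_{ij}=T_{ij}=S_{ij}-\Lambda_{ij}\,\mathrm{sign}(S_{ij})$ with $\mathrm{sign}(\Theta_{ij})=-\mathrm{sign}(T_{ij})=-\mathrm{sign}(S_{ij})$ verifies \eqref{eq:2kkt_glasso} in one line. (iv) For cross-cluster non-bridge pairs, the ansatz forces $\Theta_{ij}=0$, and the required \eqref{eq:3kkt_glasso} is precisely the imposed hypothesis \eqref{eq:lasso_condition}. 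Primal feasibility comes for free from $\boldsymbol{\Theta}\succ\bm{0}$, and convexity of graphical lasso promotes the KKT system to global optimality.

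\textbf{Main obstacle.} The substantive divergence from the $\mathrm{MTP}_2$ proof lies in case (iv). In the $\mathrm{MTP}_2$ setting, Corollary \ref{cor:Q_pos} guaranteed $\bm{R}\geq\bm{0}$, which, combined with $S_{ij}\leq\Lambda_{ij}$ on cross-cluster non-bridge pairs, automatically produced a one-sided bound equivalent to \eqref{eq:3kkt_glasso}. Once the $\mathrm{MTP}_2$ constraints are removed, the bridge-path products in Theorem \ref{lem:inverse} can take either sign and arbitrary magnitude, and \eqref{eq:3kkt_glasso} becomes a genuinely two-sided inequality, so the automatic argument collapses. This is why \eqref{eq:lasso_condition} must be imposed explicitly. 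By the same token, positive definiteness of $\boldsymbol{\Theta}$ can no longer be deduced from Lemma \ref{lem:Z-to-M matrix} via primal feasibility plus inverse-positivity and must likewise be assumed; these two additions are exactly the minimal price required to recover the rest of the $\mathrm{MTP}_2$ argument intact.
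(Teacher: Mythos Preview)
Your proposal is correct and follows exactly the route the paper indicates: the paper simply states that the proof of Theorem \ref{thm:lasso_generalized} ``can be conducted similar to Section \ref{sec:prove_main}'', and your plan carries this out by invoking Theorem \ref{lem:inverse} verbatim, re-deriving the sub-problem KKT analogue of Lemma \ref{lem:local_KKT} for the unconstrained graphical lasso, and then replacing the two $\mathrm{MTP}_2$-dependent steps (inverse-positivity via Corollary \ref{cor:Q_pos} and the ensuing positive-definiteness via Lemma \ref{lem:Z-to-M matrix}) with the explicit hypotheses $\boldsymbol{\Theta}\succ\bm{0}$ and \eqref{eq:lasso_condition}. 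Your identification of case (iv) as the place where the automatic argument breaks is precisely the point the paper makes in the discussion surrounding equations (7a)--(7c) and in Section \ref{sec:extension}.
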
 
	
	It is important to note that the decomposed form \eqref{eq:lasso_decomposed_form} aligns with the structure of \eqref{eq:Theta_form}, with the key distinction lying in the computation of the thresholded matrix $\bm{T}$. The proofs of Theorem \ref{thm:lasso_generalized} can be conducted similar to Section \ref{sec:prove_main}. 
	
	The condition \eqref{eq:lasso_condition} can be guaranteed under some circumstances. For example, we can set large values for $\Lambda_{ij}$ whenever $i\neq j$ and $i$, $j$ are in different clusters. In this case, the thresholded graph is sparse and the condition \eqref{eq:lasso_condition} is satisfied. Nevertheless, in the majority of practical situations, verifying this condition poses a substantial challenge. Conversely, these conditions are inherently satisfied when MTP$_2$ constraints are applied.

	\subsection{Extensions to dense graphs}
	
	The proposed method operates under an implicit assumption of sparsity, which is a common characteristic of high-dimensional applications where the focus is on identifying only the most significant relationships among variables. However, it is possible to encounter scenarios where the thresholded graph is dense, making it difficult to identify any bridges for conducting bridge-block decomposition. In such cases, the proposed framework can still be useful, and this section will primarily focus on exploring its potential applications under these circumstances. We will discuss various strategies and techniques for adapting the proposed method to deal with dense graphs, enabling a broader applicability of the proposed bridge-block decomposition approach.
	
	One of typical approaches to make the proposed method applicable is to tune the values of regularization coefficients. As the thresholded matrix is computed as
	\begin{equation}
		\bm{T}=\max\left(\bm{0},\bm{S}-\boldsymbol{\Lambda}\right),
	\end{equation}
	we can always increase the value of $\Lambda_{ij}$ to make the thresholded graph $\text{supp}(\bm{T})$ sufficiently sparse. When tuning the hyper-parameters is not recommended, a more general strategy is to use the solution of proposed framework as an initial point of numerical algorithms.
	
	\textbf{Warm-start algorithm}: One of the main possible strengths of the proposed decomposed-form solution is that it can be treated as an initial point (warm-start) for the numerical algorithms specialized for solving Problem \ref{eq: Graphical Lasso under MTP2}. To apply the decomposition approach, we first need to derive a suitable vertex partition of the graph based on the given data matrix $\bm{S}$ and regularization matrix $\boldsymbol{\Lambda}$. There are several possible vertex partitioning techniques that can be used, including:
	\begin{itemize} [topsep=-0.2pt, leftmargin=*]
		\setlength\itemsep{-0.05em}
		\item $k$-edge-connected decomposition: This method divides a connected graph into its $k$-edge-connected sub-graphs, which can be useful for identifying bridges or bottlenecks in the graph structure.  
		\item $k$-vertex-connected decomposition: This method decomposes a graph into its $k$-vertex-connected sub-graphs, which can be useful for identifying clusters or communities of vertices that are tightly connected to each other.
		\item $k$-core decomposition: This method identifies the maximal subgraphs of the graph in which each vertex has a degree of at least $k$, which can be useful for identifying the most densely connected regions of the graph.
		\item Community detection: This method identifies groups of vertices that are densely connected within the group and sparsely connected to vertices outside the group, which can be useful for identifying clusters or communities of vertices with similar properties or behavior.
	\end{itemize}
	Once we obtain a vertex partition $\mathcal{P}=\left\{\mathcal{V}_1,\dots,\mathcal{V}_K\right\}$, where $K$ refers to the number of clusters, we can compute the initial point using the following form:
	
	\begin{equation}
		{\Theta}_{i,j}=\begin{cases}
			[\widehat{\boldsymbol{\Theta}}_{k}]_{\pi\left(i\right),\pi\left(i\right)}+\zeta_{i} & \qquad\text{if }i=j\in\mathcal{V}_{k},\\{}
			[\widehat{\boldsymbol{\Theta}}_{k}]_{\pi\left(i\right),\pi\left(j\right)} & \qquad\text{if }i\neq j\text{\text{ and }}i,j\in\mathcal{V}_{k},\\
			-T_{ij}\big/(S_{ii}S_{jj}-T_{ij}^{2}) & \qquad\text{if }\left(i,j\right)\in\mathcal{E}_{e},\\
			0 & \qquad\text{otherwise}, 
		\end{cases}
		\label{eq:initial_solution}
	\end{equation}
	in which $\mathcal{E}_{e}$ refers to a set of external edges among clusters, i.e., 
	\begin{equation}
		\mathcal{E}_{e}=\left\{ \left(i,j\right)\left|\left(i,j\right)\in\text{supp}\left(\bm{T}\right),\psi\left(i\right)\neq\psi\left(j\right)\right.\right\},
	\end{equation}
	$\zeta_i$ is computed as
	\begin{equation}
		\zeta_{i}=\frac{1}{S_{ii}}\sum_{\left(i,m\right)\in\mathcal{E}_{e}}\frac{T_{im}^{2}}{S_{ii}S_{mm}-T_{im}^{2}},
	\end{equation} 
	and $\widehat{\boldsymbol{\Theta}}_{k}$ is computed as the optimal solution of $k$-th sub-problem:
	\begin{equation}
		\widehat{\boldsymbol{\Theta}}_{k}=\arg\min_{\boldsymbol{\Theta}_{k}\in\mathcal{M}^{p_{k}}}-\log\det\left(\boldsymbol{\Theta}_{k}\right)+\left\langle \boldsymbol{\Theta}_{k},\bm{S}_{\mathcal{V}_{k}}-\boldsymbol{\Lambda}_{\mathcal{V}_{k}}\right\rangle. 
	\end{equation}

	\begin{remark}
		The initial point is the optimal solution if the vertex-partition $\mathcal{P}$ is a bridge-block decomposition of the optimal graph. 
	\end{remark}
	\begin{remark}
		When the vertex partition $\mathcal{P}$ is given as $\mathcal{P}=\left\{ \left\{ 1\right\} ,\left\{ 2\right\} ,\dots,\left\{ p\right\} \right\} $, i.e., $\mathcal{V}_{k}=\left\{ k\right\} $ for $k=1,\dots,p$, the initial point \eqref{eq:initial_solution} reduces to a closed-form expression:
		\begin{equation}
			\Theta_{i,j}=\begin{cases}
				\frac{1}{S_{ii}}\left(1+\sum_{\left(i,m\right)\in\text{supp}\left(\bm{T}\right)}\frac{T_{im}^{2}}{S_{ii}S_{mm}-T_{im}^{2}}\right) & \qquad\text{if }i=j,\\
				-T_{ij}\big/(S_{ii}S_{jj}-T_{ij}^{2}) & \qquad\text{if }\left(i,j\right)\in\text{supp}\left(\bm{T}\right),\\
				0 & \qquad\text{otherwise}.
			\end{cases}
		\end{equation}
	\end{remark} 
	In practice, utilizing the warm-start strategy can also enhance the performance of certain numerical algorithms.

	\section{Additional Comments on Real-World Experiments}
	\label{sec:apdex-realexperiments}
	
	To show that $\text{MTP}_2$ assumption approximately holds for Crop dataset, we selected $20$ random subsets from the Crop dataset. For each subset, we computed the Graphical Lasso and the $\text{MTP}_2$ graphical model for different values of using the first $10$ observations. The remaining $36$ observations were used to calculate the out-of-sample log-likelihood, which was then averaged across all datasets. This process allows us to evaluate how well these models generalize to unseen data.
	
	\begin{figure}[t]
		
		\centering
		\includegraphics[width=10cm,height=6cm]{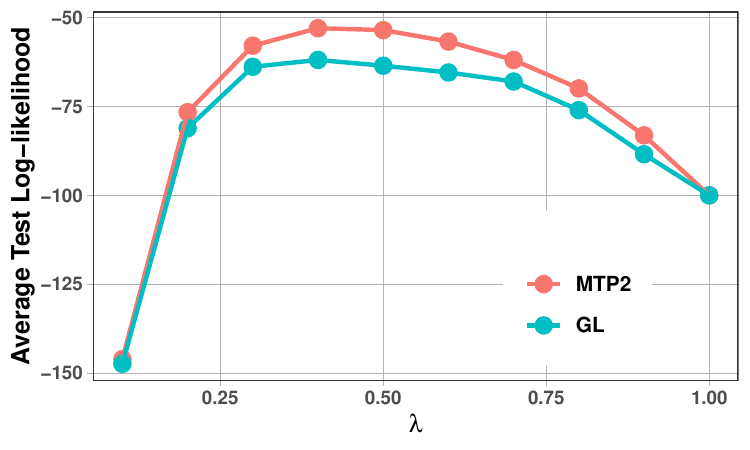}
		\caption{Average log-likelihood in out-of-sample data for different precision matrix estimation models as a function of the sparsity promoting hyperparameter $\lambda$.}
		\label{fig:testloglikehood}
	\end{figure}
	
	\begin{figure}[t]
		\centering
		\includegraphics[width=7cm,height=7cm]{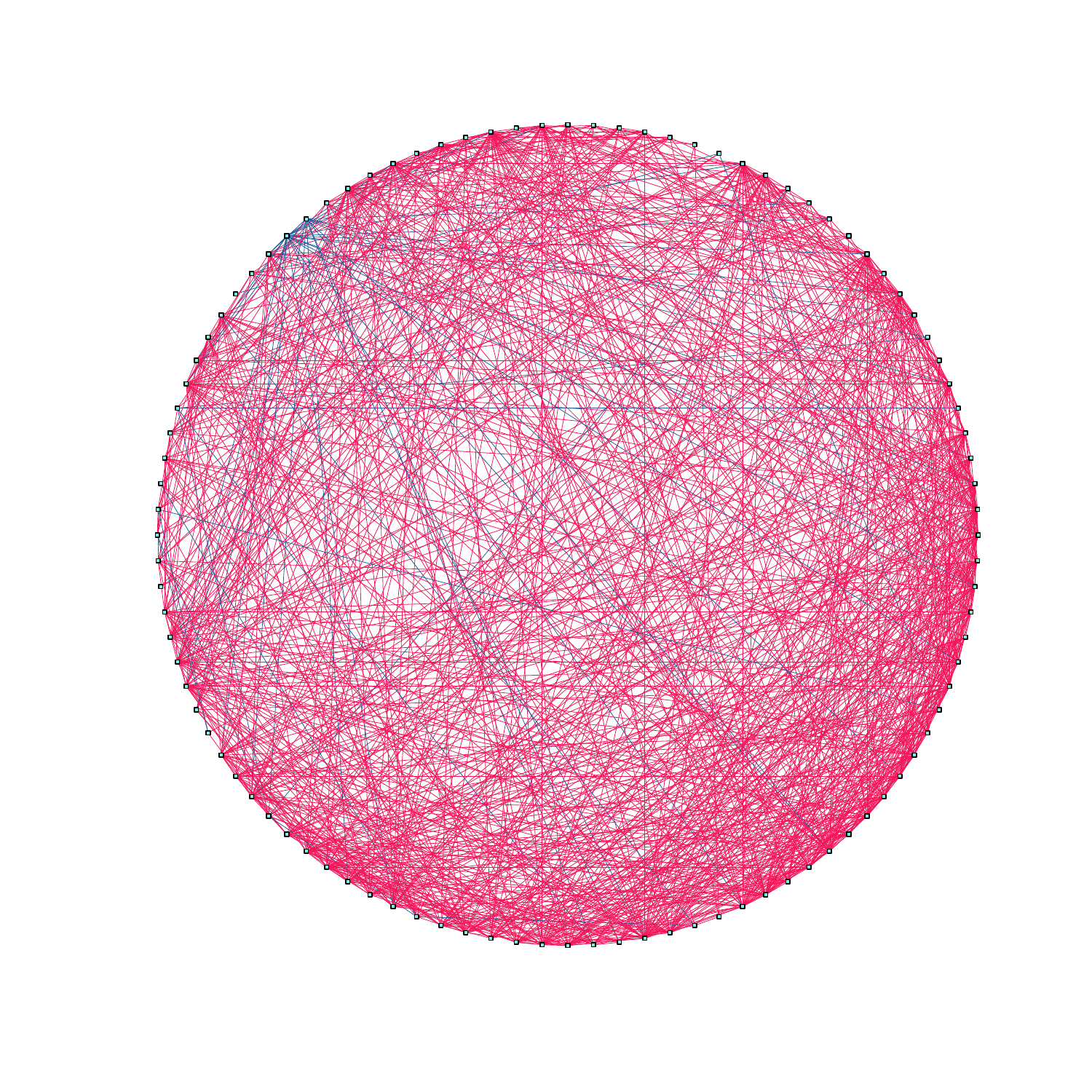}
		\caption{Estimated Graphical  Lasso of Crop data set at their highest average likelihood. Red edges represent positive conditional correlations, while blue edges represent negative ones. } 
		\label{fig:graphicallasso}
	\end{figure}
	
	As depicted in Figure \ref{fig:testloglikehood} of the attached PDF, the $\text{MTP}_2$ graphical model outperforms the Graphical Lasso, providing a higher test log-likelihood. We present one instance of the estimated graphical lasso model in Figure \ref{fig:graphicallasso}. It reveals that most conditional correlations are positive (red edges, 90\%), with a few being negative (blue edges, 10\%). This pattern implies strong positive dependence in the Crop data, aligning with the characteristics of $\text{MTP}_2$.
	
	These results are not unexpected, given that the Crop dataset comprises multiple clusters. Within the same cluster, we expect data points to exhibit greater similarity compared to those in different clusters. This situation signifies a form of positive dependence, thereby justifying the plausibility of the $\text{MTP}_2$ assumption in this problem.
	
\end{document}